%% file: main.tex
\documentclass{article}
\usepackage{iclr2027_conference}
\input{math_commands.tex}

\usepackage{natbib}

\usepackage{hyperref}
\usepackage{url}
\usepackage{float}
\hypersetup{
  pdftitle={Depth Enables Local Entropy: Quadratic Depth Dependence in Deep Variation-Norm ReLU Regression},
  pdfauthor={Tao Jiang, Minbo Gao, Shaowei Cai},
  pdfsubject={Minimax risk in deep variation-norm ReLU regression},
  pdfkeywords={minimax risk, ReLU networks, compositional learning, variation norm, bit extraction, metric entropy, Fano lower bound},
  colorlinks=true,
  linkcolor=blue!55!black,
  citecolor=blue!55!black,
  urlcolor=blue!55!black
}

\title{Depth Enables Local Entropy:\\
Quadratic Depth Dependence in Deep Variation-Norm ReLU Regression}

\author{Tao Jiang \quad Minbo Gao \quad Shaowei Cai\\
Key Laboratory of System Software (Chinese Academy of Sciences)\\
State Key Laboratory of Computer Science\\
Institute of Software, Chinese Academy of Sciences\\
School of Computer Science and Technology, University of Chinese Academy of Sciences\\
Beijing, China\\
\texttt{\{jiangt,gaomb,caisw\}@ios.ac.cn}}

\iclrpreprintcopy

\begin{document}
\maketitle

\begin{abstract}
We study Gaussian regression over the explicit vector-valued Parhi--Nowak deep-$\mathcal{R}\mathrm{BV}^{2}$ architecture with depth $L$, width $w$, layer-sum variation budget $A$, and output bound $B$.  For this $O(Lw^2)$-parameterized architecture, the known lower and upper bounds differ by one factor of depth.  We construct a local packing showing that the quadratic depth dependence is intrinsic under an explicit sample-size-dependent radius condition.  The packing has log-cardinality $\Omega(L^2w^2\log w)$; its codewords lie in an $O(\lambda)$ $L^2$ ball and are pairwise $\Omega(\lambda)$-separated.  The main ingredients are a bias-corrected bounded-coefficient approximation theorem and balanced amplification: multiplying a depth-$D$ ReLU network by $q$ can be implemented using one constant channel so that every coefficient grows by only $q^{1/D}$.  Translation to vector-valued $\mathcal{R}\mathrm{BV}^{2}$ blocks then has layer-sum cost $O(Dw^2q^{1/D})$.  Gaussian Fano yields a radius-explicit lower bound governed by the output, testing, and representation scales.  Under $A=B=R$, $\sigma\asymp R$, and the stated radius condition, this gives
\[
  \mathfrak R_n^*\gtrsim \frac{L^2w^2\log w\,R^2}{n}.
\]
A pseudodimension-based finite-net upper bound gives $\widetilde O(L^2w^2R^2/n)$ for unbounded Gaussian responses.  Thus the minimax risk has quadratic polynomial dependence on depth, up to logarithmic factors, and exhibits a transition to representation-limited behavior at smaller radius.
\end{abstract}

\input{sections/01_introduction}
\input{sections/02_setting}
\input{sections/03_results}
\input{sections/04_proof_overview}
\input{sections/05_packing}
\input{sections/06_translation}
\input{sections/07_lower_bound}
\input{sections/08_upper_bound}
\input{sections/09_discussion}

\section*{Reproducibility statement}
All assumptions and architecture conventions are stated explicitly in Sections~\ref{sec:setting}--\ref{sec:results}, and complete proofs are provided in the appendices, including the corrected approximation reduction, balanced amplification and $\RBV$ translation, and the Gaussian lower and upper bounds.  The results are entirely analytical; no experiments or external datasets are used.

\section*{Generative AI use statement}
Generative AI tools were used for literature search, formulation and critical checking of candidate mathematical claims, proof assistance, algebraic sanity checks, and manuscript editing.  All AI-assisted material was independently verified against primary sources or direct derivations.  The authors take responsibility for the final content.

\bibliography{references}
\bibliographystyle{iclr2027_conference}

\appendix
\input{sections/A_notation_and_class}
\input{sections/B_corrected_approximation}
\input{sections/B_grid_code}
\input{sections/C_balanced_amplification}
\input{sections/D_rbv_translation}
\input{sections/E_fano}
\input{sections/F_upper_bound}
\input{sections/G_radius_regimes}

\end{document}

%% file: math_commands.tex
\usepackage{amsmath,amssymb,amsthm,mathtools}
\usepackage{booktabs,array}
\usepackage{enumitem}
\usepackage{tikz}
\usetikzlibrary{arrows.meta,positioning,calc,decorations.pathreplacing}

\newcommand{\R}{\mathbb{R}}

\newcommand{\E}{\mathbb{E}}

\newcommand{\cF}{\mathcal{F}}
\newcommand{\cC}{\mathcal{C}}
\newcommand{\cG}{\mathcal{G}}
\newcommand{\cZ}{\mathcal{Z}}

\newcommand{\Risk}{\mathfrak{R}}
\newcommand{\VarCost}{\mathfrak{V}}
\newcommand{\RBV}{\mathcal{R}\mathrm{BV}^{2}}
\newcommand{\relu}{\rho}
\newcommand{\Lip}{\operatorname{Lip}}
\newcommand{\Pdim}{\operatorname{Pdim}}
\newcommand{\KL}{D_{\mathrm{KL}}}

\newcommand{\eps}{\varepsilon}

\newcommand{\pos}[1]{(#1)_{+}}
\newcommand{\norm}[1]{\left\lVert #1\right\rVert}

\newcommand{\dd}{\,\mathrm{d}}
\newcommand{\defeq}{\coloneqq}

\newtheorem{theorem}{Theorem}
\newtheorem{corollary}[theorem]{Corollary}
\newtheorem{proposition}[theorem]{Proposition}
\newtheorem{lemma}[theorem]{Lemma}

\newtheorem{remark}[theorem]{Remark}
\newtheorem{definition}[theorem]{Definition}

%% file: sections/01_introduction.tex
\section{Introduction}
\label{sec:intro}

Depth can compress a compositional description dramatically, but whether the corresponding statistical complexity grows linearly or quadratically with depth depends on more than parameter counting.  A generic piecewise-linear computation-graph bound pays once for the number of parameters and once for computational depth.  The central question is whether the second payment is a proof artifact or reflects information that depth can actually decode.

Approximation-theoretic benefits of depth are well established.  Depth-separation constructions exhibit exponential savings for selected target families, while quantitative ReLU approximation theory identifies regimes in which growing depth improves or is required for optimal approximation rates \citep{telgarsky2016benefits,yarotsky2017error,yarotsky2018optimal}.  These results concern representation.  The question here is whether depth contributes a second factor to the local statistical complexity of a norm-constrained compositional class.

\citet{ganguli2026dichotomy} isolate this issue for a deep variation-space architecture on the circle.  Under the $O(Lw^2)$ parameterization used there, their bounds have the schematic form
\begin{equation}
  \Omega\!\left(\frac{Lw^2R^2}{n}\right)
  \;\leq\;
  \Risk_n^*
  \;\leq\;
  \widetilde O\!\left(\frac{L^2w^2R^2}{n}\right).
  \label{eq:motivating-gap}
\end{equation}
The displayed base-block formula in that work does not explicitly display the intermediate dimensions, while the cited Parhi--Nowak construction and the $O(Lw^2)$ parameter count correspond to vector-valued intermediate maps.  We therefore study the vector-valued interpretation consistent with the cited construction and parameter count: $d_0=d_L=1$, $d_\ell\leq w$, and each compositional block has hidden width at most $w$.

Norm-controlled neural function spaces offer a complementary account of network complexity.  Early Barron-type and convex neural-network formulations control approximation and estimation through function-space norms \citep{barron1993universal,bach2017breaking}.  Exact descriptions of bounded-norm ReLU networks subsequently connected such norms to spline and Radon-domain variation spaces \citep{savarese2019infinite,ongie2020function,parhi2021banach}.  The deep compositional and vector-valued extensions developed in \citet{parhi2022kinds,parhi2026compositional} and \citet{shenouda2024variation} provide the function-space setting used here.

Within this architecture, the missing depth factor is realized by a local function-space code rather than by a refinement of the generic upper-bound argument.  We construct such a code with
\[
  \log |\cZ|=\Omega(L^2w^2\log w).
\]
The code begins with bounded-coefficient bit extraction.  A width-$m$, depth-$D$ ReLU network can approximate every bounded $1$-Lipschitz function on $[0,1]$ to accuracy
\begin{equation}
  O\!\left((m^2D^2\log m)^{-1}\right)
  \label{eq:unit-approx-intro}
\end{equation}
while keeping every matrix and bias entry bounded by one.  The construction originates in \citet{ou2024quantization}.  Under the affine-layer convention with biases, the published layerwise rescaling step requires a correction to obtain homogeneous scaling.  We provide this correction by augmenting each hidden state with a constant channel; a unit-coefficient fan-out construction then trades coefficient magnitude for additional depth.  The corrected derivation preserves~\eqref{eq:unit-approx-intro} up to universal constants.

The same augmented-state idea yields our key architectural lemma.  If $N$ has depth $D$, then $qN$ has the same depth and one additional hidden coordinate, with coefficient magnitude only $q^{1/D}$.  A coefficient-$s$ fully connected layer has vector-valued $\mathcal{R}\mathrm{BV}^{2}$ cost $O(w^2s)$, hence
\begin{equation}
  \VarCost_{D,w}(qN)
  \lesssim Dw^2q^{1/D}.
  \label{eq:balanced-cost-intro}
\end{equation}
This converts the $1/M$ label scale of a bit-extraction code into a statistical margin without paying $M$ in a single layer.

The minimax question requires more than a global entropy bound.  A bounded-weight network class may contain exponentially many separated functions that either fall outside the layer-sum variation ball or live at an amplitude much larger than the Gaussian testing scale.  The relevant obstruction must survive both the representation constraint and localization.  Our construction does so: after the statistical amplitude $\lambda$ is chosen, every codeword has $L^2$ norm $O(\lambda)$, distinct codewords are $\Omega(\lambda)$ apart, and all codewords remain in the prescribed deep-$\mathcal{R}\mathrm{BV}^{2}$ ball.  Tight covering-number bounds for ordinary bounded-weight fully connected ReLU networks already show global entropy of order
\[
  W^2D\log\!\left(\frac{(W+1)^DB^D}{\eps}\right),
\]
which is quadratic in $D$ at fixed $B=1$ and fixed accuracy \citep{ou2026covering}.  The present result embeds a comparable quadratic-depth code into the variation-constrained class at the testing scale and converts it into a minimax lower bound.

This local viewpoint also connects the construction to statistical analyses of neural regression, which give minimax or near-minimax guarantees under compositional smoothness, Besov-type, and shallow neural variation-space assumptions \citep{schmidthieber2020nonparametric,suzuki2019adaptivity,parhi2023near}.  Classical entropy methods connect packing and covering numbers to minimax risk \citep{yang1999information,tsybakov2009introduction}, while localized complexity theory emphasizes the geometry near the testing scale \citep{bartlett2005local}.  Sharp metric-entropy results for shallow neural variation spaces provide a close comparison \citep{siegel2024sharp}.

\paragraph{Contributions.}
\begin{enumerate}[leftmargin=*,itemsep=2pt,topsep=2pt]
  \item We give a bias-corrected derivation of the unit-coefficient approximation rate~\eqref{eq:unit-approx-intro}.  The new homogeneous-lift lemma handles all biases exactly and replaces the two bias-sensitive scaling steps used in the approximation argument.
  \item We construct a local packing of size $\exp(\Omega(M))$, with $M=\Theta(L^2w^2\log w)$, inside the explicit vector-valued deep-$\mathcal{R}\mathrm{BV}^{2}$ architecture.
  \item We prove a radius-explicit minimax lower bound separating the output cap, Gaussian testing scale, and representation-limited scale.  A sample-size-dependent corollary gives a less restrictive sufficient condition than the corresponding uniform-in-$n$ condition.
  \item We prove a Gaussian-regression upper bound using a formal architecture-to-computation-graph lemma, pseudodimension, population covering, and a finite-class least-squares oracle inequality that handles Gaussian responses directly.
\end{enumerate}

\paragraph{Organization.}
Sections~\ref{sec:setting}--\ref{sec:results} give the model and theorems.  Section~\ref{sec:overview} locates the second depth sum.  Sections~\ref{sec:packing-main}--\ref{sec:lower-main} construct the packing and prove the lower bound.  Section~\ref{sec:upper-main} proves the Gaussian upper bound.  The appendices contain the corrected approximation reduction and complete technical proofs.

%% file: sections/02_setting.tex
\section{Statistical and function-class setting}
\label{sec:setting}

\paragraph{Circle and risk.}
Identify the circle with $t\in[0,2)$ under normalized uniform measure $\mu(\dd t)=\dd t/2$.  We observe
\begin{equation}
  T_i\sim\mu,
  \qquad
  Y_i=f^\star(T_i)+\xi_i,
  \qquad
  \xi_i\sim N(0,\sigma^2),
  \label{eq:model}
\end{equation}
independently.  For a class $\cF$,
\begin{equation}
  \Risk_n^*(\cF,\sigma)
  \defeq
  \inf_{\widehat f}\sup_{f^\star\in\cF}
  \E_{f^\star}\norm{\widehat f-f^\star}_{L^2(\mu)}^2.
  \label{eq:minimax-risk}
\end{equation}

\paragraph{Vector-valued blocks.}
For $s:\R^d\to\R^{D'}$ of the form
\begin{equation}
  s(x)=\sum_{k=1}^{K}v_k\relu(w_k^\top x-b_k)+Cx+c_0,
  \label{eq:rbv-block}
\end{equation}
we use the Parhi--Nowak norm
\begin{align}
  \norm{s}_{\RBV(d;D')}
  &\defeq
  \sum_{k=1}^{K}\norm{v_k}_1\norm{w_k}_2 \\
  &\quad+
  \sum_{j=1}^{D'}\left(
  |s_j(0)|+\sum_{r=1}^{d}|s_j(e_r)-s_j(0)|
  \right).
  \label{eq:vector-rbv-norm}
\end{align}
This is the vector-valued Radon-domain variation-space convention of \citet{parhi2022kinds}, consistent with the multi-output variation-space framework of \citet{shenouda2024variation}.

\paragraph{Deep architecture.}
Fix
\begin{equation}
  d_0=d_L=1,
  \qquad 1\leq d_\ell\leq w\quad(1\leq\ell<L),
  \label{eq:intermediate-dims}
\end{equation}
and blocks $s_\ell:\R^{d_{\ell-1}}\to\R^{d_\ell}$ of the form~\eqref{eq:rbv-block}, with $K_\ell\leq w$.  Define the layer-sum representation cost
\begin{equation}
  \VarCost_{L,w}(f)
  \defeq
  \inf_{f=s_L\circ\cdots\circ s_1}
  \sum_{\ell=1}^{L}\norm{s_\ell}_{\RBV(d_{\ell-1};d_\ell)},
  \label{eq:deep-cost}
\end{equation}
where the infimum is over~\eqref{eq:intermediate-dims} and $K_\ell\leq w$.  We refer to~\eqref{eq:deep-cost} as a representation cost: scalar gains can be distributed across layers, so the resulting functional is not one-homogeneous.

\begin{definition}[Deep variation architecture class]
For $A,B>0$, let
\begin{equation}
  \cC_{L,w}(A,B)
  \defeq
  \left\{f:[0,2)\to\R:
  \VarCost_{L,w}(f)\leq A,\ \norm f_\infty\leq B
  \right\},
  \label{eq:class-AB}
\end{equation}
with continuous endpoint identification when periodized.  Write
$\Risk_n^*(A,B,\sigma)=\Risk_n^*(\cC_{L,w}(A,B),\sigma)$.
The motivating normalization is $A=B=R$ and $\sigma\asymp R$.
\end{definition}

A block contains $O(w^2)$ scalar parameters, so the architecture has
\begin{equation}
  W_{\rm par}=O(Lw^2).
  \label{eq:param-count}
\end{equation}
Appendix~\ref{app:class} gives the exact count and a formal computation-graph realization.

\paragraph{Standard-network convention.}
Let $\mathcal N(W,D,B)$ denote scalar-output realizations
\begin{equation}
  h_\ell=\relu(A_\ell h_{\ell-1}+b_\ell),\quad 1\leq\ell<D,
  \qquad
  N(x)=A_Dh_{D-1}+b_D,
  \label{eq:standard-network}
\end{equation}
with hidden width at most $W$ and every matrix and bias entry bounded by $B$ in absolute value.  Depth counts affine maps, including the final affine output layer.  This is the convention used in the bounded-coefficient approximation result.

%% file: sections/03_results.tex
\section{Main results}
\label{sec:results}

We first record the approximation input in the corrected form used below.

\begin{theorem}[Bias-corrected bounded-coefficient approximation]
\label{thm:corrected-approx}
There exist universal constants $C_{\rm app},D_0>0$ such that, for all integers $m,D\geq D_0$ and every continuous $g:[0,1]\to\R$ satisfying
$\norm g_\infty\leq1$ and $\Lip(g)\leq1$, there is
$N\in\mathcal N(m,D,1)$ with
\begin{equation}
  \norm{N-g}_{L^\infty([0,1])}
  \leq \frac{C_{\rm app}}{m^2D^2\log m}.
  \label{eq:corrected-approx-rate}
\end{equation}
\end{theorem}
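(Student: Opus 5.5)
We will not rebuild the bit-extraction machinery from scratch. We take the construction of \citet{ou2024quantization} as a black box at the level of \emph{unconstrained-coefficient} networks, and correct only the scaling steps. That construction gives, for width $O(m)$ and depth $O(D)$, a ReLU network $\widetilde N$ with $\norm{\widetilde N-g}_{L^\infty([0,1])}\lesssim (m^2D^2\log m)^{-1}$. Its weights are bounded by some quantity $Q$ that is at most $\exp(O(D))$ up to polynomial factors in $m$. The published argument then tries to reach coefficient bound one by rescaling layers: multiply layer $\ell$ by $c_\ell$ with $\prod_\ell c_\ell$ compensated elsewhere. This uses homogeneity $\relu(cx)=c\relu(x)$ for $c>0$, which is false for affine layers whose biases are rescaled independently. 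The plan is therefore in three steps: a homogeneous lift, a balanced redistribution of magnitudes, and a unit-coefficient fan-out trading magnitude for depth. Constant-factor overheads in width and depth are then absorbed by renaming $m,D$ and enlarging $D_0$ and $C_{\rm app}$.

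\textbf{Step 1 (homogeneous lift).} We augment each hidden state $h_\ell$ with a constant channel $u_\ell\equiv 1$, computed as $u_1=\relu(0\cdot x+1)$ and $u_\ell=\relu(u_{\ell-1})$. Each affine layer $A_\ell h+b_\ell$ then becomes a \emph{linear} map $[A_\ell\ b_\ell;\,0\ 1]$ applied to $(h,u)$, with zero biases after the first layer. On the lifted network, scaling layer $\ell$ by $c_\ell>0$ (including the constant row) scales the whole output by $\prod_\ell c_\ell$. This holds exactly, because every map is now positively homogeneous. The lift costs one extra width unit and no extra depth.

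\textbf{Step 2 (balanced redistribution).} Choose $c_\ell=1/\max_{ij}|\text{entries of layer }\ell|$, so that every lifted layer has entries bounded by one. The network then computes $\widetilde N/P$, where $P=\prod_\ell c_\ell^{-1}\le Q^{D}$. The constant channel at layer $\ell$ now carries value $\prod_{k\le\ell}c_k$, which is fine, since it is just another coordinate. This is where the bias correction actually matters: after the rescaling, all biases are absorbed into a linear coefficient that is itself bounded by one.

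\textbf{Step 3 (fan-out amplification).} We must multiply the output by $P$ using only unit coefficients, paying in depth instead. A layer $y\mapsto \relu(\mathbf 1 y)$ followed by summation over $k$ copies multiplies a nonnegative signal by $k$ with entries in $\{0,1\}$. We apply this to $\relu(\pm\cdot)$ of the output, and also to the constant channel for the bias. Using width $m$, each doubling-type layer gains a factor of about $m$, so $\log P/\log m$ extra layers suffice. Fractional parts are handled by a final layer with coefficient $P/m^{j}\le1$.

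\textbf{Main obstacle.} The main obstacle is the budget in Step 3. We need $\log P\lesssim D\log m$, with at most $O(D)$ added layers, so that the depth roughly doubles. This requires knowing that the underlying construction has per-layer magnitudes of at most $\mathrm{poly}(m)$ rather than $\exp(D)$ in one layer. We will therefore check this bound layer by layer in the bit-extraction blocks of \citet{ou2024quantization}, re-splitting any large single-layer gains across depth as in Step 2, before concluding.
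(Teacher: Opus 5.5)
\textbf{Genuine gap: the coefficient budget is assumed, not shown.} Your Steps 1--3 are the right mechanism. They coincide with the paper's Lemma~\ref{lem:homogeneous-lift}, Lemma~\ref{lem:fanout} and Corollary~\ref{cor:depth-weight-corrected}; per-layer factors $c_\ell$ instead of a uniform $s=q^{1/D}$ change nothing. The gap is the quantitative input you postpone to the end, which is the actual substance of the theorem.

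Let $\beta_\ell\geq1$ be the largest entry of the $\ell$th lifted layer. Then Step 3 costs about $\log P/\log\lfloor m/2\rfloor$ extra layers, with $\log P=\sum_\ell\log\beta_\ell$, so you need $\log P=O(D\log m)$. The bound you actually assert, $Q\leq e^{O(D)}\mathrm{poly}(m)$ per weight, gives only $\log P=O(D^2+D\log m)$. That means $\Theta(D^2/\log m)$ extra layers. After renaming, the underlying construction could then use depth only of order $\sqrt{D\log m}$, and the factor $D^2$ in the rate degrades to $D\log m$.

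Your diagnosis of the danger is also inverted. Step 2 sees only the product $\prod_\ell\beta_\ell$, so a single layer with gain $e^{O(D\log m)}$ is harmless and needs no re-splitting. What is fatal is superpolynomial magnitudes repeated over $\Theta(D)$ layers, as with the steep step functions of classical bit extraction. Everything therefore hinges on a coefficient audit of a specific, correct construction, and the proposal does not contain one.

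\textbf{The black box is not available as you describe it.} The bias-sensitive rescaling sits \emph{inside} the proof of \citet{ou2024quantization}, in the bounded-spline proposition invoked at their equations (144) and (161). So the published argument does not by itself exhibit a network computing the target with a known coefficient profile. The paper opens the construction exactly there:
\begin{itemize}
\item Lemma~\ref{lem:corrected-spline} realizes $\Sigma(X,E)$ by taking the unit-coefficient small-amplitude spline network of their Proposition~C.1 and applying the homogeneous lift with $q=(2w_s)^{30v}$. This costs one extra width unit and gives coefficients at most $2w_s$.
\item The width bookkeeping is then rechecked.
\item The imported coefficient bound $\max\{8mn,3^{n+2}\}$, with $n\asymp\log W$ chosen so that $2^{n+7}\leq W/5$, yields Proposition~\ref{prop:polyweight-corrected}, a network in $\mathcal N(W,U,W^2)$.
\item Only then does the conversion give $J\leq 4U+1$. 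With $\overline m=m-1$ and $U=\lfloor(D-1)/5\rfloor$, this means width $m$ and depth at most $5U+1\leq D$.
\end{itemize}
Your global variant, with one constant channel for the whole composed network and each spline gain placed honestly in a single layer, is plausible. It would simplify the width bookkeeping, but it needs the same layer-by-layer audit to establish $\log P=O(D\log m)$.
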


The bit-extraction construction is due to \citet{ou2024quantization}.  Appendix~\ref{app:corrected-approx} gives a bias-corrected derivation and tracks all resulting changes in width and depth explicitly.

The correction has two steps.  Starting from a polynomial-coefficient approximant $N\in\mathcal N(W,D,B)$, a homogeneous lift augments each hidden state by one constant coordinate and realizes $B^{-D}N$ with unit coefficients.  A width-$(W+1)$ fan-out construction then trades coefficient magnitude for additional depth, restoring the output amplitude using
$J=\lceil D\log B/\log\lfloor(W+1)/2\rfloor\rceil$ additional layers.  In the required case $B=W^2$, one has $J=O(D)$; replacing the two bias-sensitive spline-scaling calls by this construction preserves the width/depth asymptotics and yields~\eqref{eq:corrected-approx-rate}.

Let
\begin{equation}
  D=L-1,
  \qquad m=w-1,
  \qquad M=\left\lfloor c_0m^2D^2\log m\right\rfloor,
  \label{eq:DmM}
\end{equation}
where $c_0>0$ is sufficiently small.  One layer is reserved for folding the circle and one hidden coordinate for homogeneous amplification.

\begin{theorem}[Radius-explicit lower bound]
\label{thm:general-lower}
There are universal constants $c,C_0,C_{\rm tr},c_0>0$ and integers $L_0,w_0$ such that, for $L\geq L_0+1$, $w\geq w_0+1$, every $A,B,\sigma>0$, and every $n\geq1$,
\begin{equation}
  \Risk_n^*(A,B,\sigma)
  \geq
  c\left[
  \min\left\{
    B,
    \sigma\sqrt{\frac{M}{n}},
    \frac1M\left(
      \frac{\pos{A-C_0}}{C_{\rm tr}Dw^2}
    \right)^D
  \right\}
  \right]^2.
  \label{eq:general-lower}
\end{equation}
The packing is local: at its active amplitude $\lambda$, every codeword has $L^2$ norm at most $C\lambda$, and distinct codewords are at least $c\lambda$ apart.
\end{theorem}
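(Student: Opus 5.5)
The plan is a local packing plus Gaussian Fano. The packing comes from the bit-extraction content of Theorem~\ref{thm:corrected-approx}, and balanced amplification puts it in the deep-$\RBV$ ball.

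\emph{Step 1: a bit code with unit coefficients.} Fix $M$ as in \eqref{eq:DmM} and a grid of $M$ disjoint intervals in $[0,1]$ of length $1/M$. For each sign pattern $z\in\{\pm1\}^M$, let $g_z$ be the sum of tent bumps of height $\asymp 1/M$ (slope one) on the grid cells, with signs $z_j$. Then $g_z$ is $1$-Lipschitz and bounded by one. With $c_0$ small, Theorem~\ref{thm:corrected-approx} (applied at width $m=w-1$ and depth $D=L-1$) gives $N_z\in\mathcal N(m,D,1)$ with $\norm{N_z-g_z}_\infty\leq C_{\rm app}/(m^2D^2\log m)\leq \eta/M$ for a small constant $\eta$. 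By Varshamov--Gilbert, restrict to a subset $\cZ$ with $\log|\cZ|\geq M/8$ and pairwise Hamming distance $\geq M/8$. Then $\norm{g_z-g_{z'}}_{L^2}\asymp 1/M$ for distinct $z,z'\in\cZ$, all $\norm{g_z}_{L^2}\lesssim 1/M$, and the $\eta/M$ approximation error keeps both properties for $N_z$ once $\eta$ is small.

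\emph{Step 2: amplitude and cost.} Set $f_z=qN_z\circ\phi$, where $\phi$ is the one-layer fold of $[0,2)$ onto $[0,1]$, the tent map $t\mapsto 1-|1-t|$. It has bounded $\RBV$ cost, so it contributes only a constant to $C_0$ and halves nothing in the $L^2$ norms under $\mu$. The active amplitude is $\lambda=q/M$. By the balanced amplification lemma, $qN_z$ is a depth-$D$ network of width $m+1=w$ with coefficients at most $q^{1/D}$. Translating each layer to a vector-valued block costs $O(w^2q^{1/D})$, so $\VarCost_{L,w}(f_z)\leq C_0+C_{\rm tr}Dw^2q^{1/D}$. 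This is at most $A$ as soon as $q\leq\bigl(\pos{A-C_0}/(C_{\rm tr}Dw^2)\bigr)^D$, i.e.\ as soon as $\lambda$ is at most the third term of \eqref{eq:general-lower}. Also $\norm{f_z}_\infty\leq q(\norm{g_z}_\infty+\eta/M)\lesssim \lambda$, so $\lambda\leq cB$ keeps the class constraint.

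\emph{Step 3: Fano.} Pick $\lambda=c'\min\{B,\sigma\sqrt{M/n},\text{third term}\}$. The KL divergence between the sample laws of $f_z$ and $f_{z'}$ is $n\norm{f_z-f_{z'}}_{L^2}^2/(2\sigma^2)\lesssim n\lambda^2/\sigma^2\leq c'^2M$. This is at most $\tfrac14\log|\cZ|$ for small $c'$. Fano then gives risk $\gtrsim\lambda^2$, which is \eqref{eq:general-lower}, and locality follows from Step 1.

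The main obstacle is Step 1. The cell heights are $1/M$ while the approximation error is only $O(1/M)$, so the constants must be tuned so that $\eta$ beats the bump $L^2$ mass. This must hold uniformly on most cells, which is why $M$ carries the small constant $c_0$. A secondary check is that the fold and the constant channel fit within $L$ layers and width $w$; that is exactly the reservation $D=L-1$, $m=w-1$.
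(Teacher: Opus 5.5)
Your proposal is correct and follows the paper's proof essentially step for step. The ingredients match: a Varshamov--Gilbert code of $1$-Lipschitz functions at scale $1/M$ approximated by unit-coefficient networks via Theorem~\ref{thm:corrected-approx}, the tent-map fold onto $[0,1]$, balanced amplification with gain $q=\lambda M$ and per-layer $\RBV$ cost $O(w^2q^{1/D})$, and Gaussian Fano with nearest-neighbour decoding. The only difference is cosmetic: you use signed disjoint tent bumps instead of the paper's piecewise-linear interpolation $h_z$ of $\{0,1\}$ grid values, which removes the cross terms from the $L^2$ separation computation but changes nothing else.
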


The three terms are, respectively, the output cap, Gaussian testing scale, and representation-limited amplification scale.

\begin{corollary}[Sample-size-dependent quadratic-depth regime]
\label{cor:n-dependent}
Fix constants $0<c_\sigma\leq C_\sigma<\infty$.  There exist constants $c,c',C_{\rm rad}>0$, depending at most on $c_\sigma$ and $C_\sigma$, such that the following holds.  Assume $A=B=R$, $c_\sigma R\leq\sigma\leq C_\sigma R$, $n\geq M$, and $R\geq2C_0$.  If
\begin{equation}
  R^{D-1}
  \geq
  \left(C_{\rm rad}Dw^2\right)^D
  \frac{M^{3/2}}{\sqrt n},
  \label{eq:n-dependent-radius}
\end{equation}
then
\begin{equation}
  \Risk_n^*(R,R,\sigma)
  \geq c\frac{R^2M}{n}
  \geq c'\frac{L^2w^2\log w\,R^2}{n}.
  \label{eq:n-dependent-quadratic}
\end{equation}
\end{corollary}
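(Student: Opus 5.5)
The plan is to apply Theorem~\ref{thm:general-lower} with $A=B=R$ and show that, under the stated hypotheses, the minimum in \eqref{eq:general-lower} is at least a constant multiple of $R\sqrt{M/n}$. Squaring then gives the first inequality in \eqref{eq:n-dependent-quadratic}. The second inequality comes from converting $M$ back into $L,w$ via \eqref{eq:DmM}.

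\emph{First two terms.} Because $n\geq M$, we have $R\geq R\sqrt{M/n}$, so the output cap $B=R$ is at least $R\sqrt{M/n}$. Because $\sigma\geq c_\sigma R$, the Gaussian testing term satisfies $\sigma\sqrt{M/n}\geq c_\sigma R\sqrt{M/n}$. Hence
\[
\min\bigl\{B,\sigma\sqrt{M/n}\bigr\}\geq \min\{1,c_\sigma\}\,R\sqrt{M/n}.
\]
The upper constant $C_\sigma$ is not needed for the lower bound.

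\emph{Third term.} Since $R\geq 2C_0$, we have $\pos{A-C_0}=R-C_0\geq R/2$. The representation-limited term is therefore at least $\frac1M\bigl(R/(2C_{\rm tr}Dw^2)\bigr)^D$. We want this to be at least $R\sqrt{M/n}$. That requirement rearranges to
\[
R^{D-1}\geq (2C_{\rm tr}Dw^2)^D\,\frac{M^{3/2}}{\sqrt n},
\]
which is exactly \eqref{eq:n-dependent-radius} with the choice $C_{\rm rad}\defeq 2C_{\rm tr}$. This step is the only place the radius condition enters, and it is the crux of the argument. The point is that the condition scales as $n^{-1/2}$, rather than as a uniform-in-$n$ bound, because we only need to reach the testing scale $R\sqrt{M/n}$ and not the cap $R$.

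\emph{Combining.} The bracket in \eqref{eq:general-lower} is at least $\min\{1,c_\sigma\}R\sqrt{M/n}$, so $\Risk_n^*\geq c\min\{1,c_\sigma\}^2R^2M/n$. For the conversion to $L$ and $w$, enlarge $L_0$ and $w_0$ if necessary so that $c_0m^2D^2\log m\geq 2$; then the floor in the definition of $M$ loses at most a factor of $2$. Moreover, for $L\geq 2$ and $w\geq 4$ we have $D=L-1\geq L/2$, $m=w-1\geq w/2$, and $\log m\geq \tfrac12\log w$. Together these give
\[
M\geq \frac{c_0}{64}\,L^2w^2\log w,
\]
which yields $c'$. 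The main obstacle is purely bookkeeping: keeping $C_{\rm rad}$ and $c'$ independent of $L,w,n,R$. This holds because the constants $C_0$, $C_{\rm tr}$, $c_0$ and $c$ supplied by Theorem~\ref{thm:general-lower} are universal.
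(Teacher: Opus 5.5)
Your proposal is correct and follows essentially the same route as the paper. Both arguments set $A=B=R$ in Theorem~\ref{thm:general-lower}, bound the output cap and testing terms below by a constant multiple of $R\sqrt{M/n}$ using $n\geq M$ and $\sigma\geq c_\sigma R$, use $R-C_0\geq R/2$ together with \eqref{eq:n-dependent-radius} to get $\lambda_{\rm rep}\geq R\sqrt{M/n}$, and convert $M$ to $L^2w^2\log w$ via \eqref{eq:DmM}. Your version is just more explicit: you take $C_{\rm rad}=2C_{\rm tr}$ and note that $C_\sigma$ is not needed for the lower bound.
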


We call the parameter range in Corollary~\ref{cor:n-dependent}, in which the Gaussian testing scale is active, the \emph{statistical regime}.

A convenient condition uniform over all $n\geq M$ is
\begin{equation}
  R^{D-1}\geq
  \left(C_{\rm rad}Dw^2\right)^D M.
  \label{eq:uniform-radius}
\end{equation}
Equivalently,
\begin{align}
  R
  &\geq (C_{\rm rad}Dw^2)^{D/(D-1)}M^{1/(D-1)} \notag\\
  &=C_{\rm rad}Dw^2
  \exp\!\left(
    \frac{\log(C_{\rm rad}Dw^2)+\log M}{D-1}
  \right) \notag\\
  &=C_{\rm rad}Dw^2
  \exp\!\left(O\!\left(\frac{\log(Lw)}{L}\right)\right).
  \label{eq:uniform-radius-asymptotic}
\end{align}

\begin{theorem}[Gaussian pseudodimension upper bound]
\label{thm:upper}
There exists a universal $C>0$ such that, for all integers $L,w\geq1$, all $A,B,\sigma>0$, and all integers $n\geq2$,
\begin{equation}
  \Risk_n^*(A,B,\sigma)
  \leq
  C\min\left\{
    B^2,
    \frac{(\sigma^2+B^2)L^2w^2\log(2Lw)\log(en)}{n}
  \right\}.
  \label{eq:upper-main}
\end{equation}
\end{theorem}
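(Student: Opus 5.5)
The plan is to bound the minimax risk by the risk of a least-squares estimator over a finite $L^2(\mu)$-net of a bounded computation-graph class. The trivial bound $B^2$ comes from the estimator $\widehat f\equiv0$, since $\norm{f^\star}_\infty\leq B$. For the second term, I will first use the architecture-to-computation-graph realization from Appendix~\ref{app:class}. It writes every $f$ with $\VarCost_{L,w}(f)<\infty$ as a ReLU computation graph with $W_{\rm par}=O(Lw^2)$ parameters and depth $O(L)$. The linear skip terms $Cx+c_0$ in each block are absorbed as identity channels, or as affine pieces feeding the next block; each block is one ReLU layer plus affine maps, so this increases the depth by only a constant factor. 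I will then clip the output at level $B$ via $x\mapsto\min\{B,\max\{-B,x\}\}$, which costs two more ReLU layers and leaves members of $\cC_{L,w}(A,B)$ unchanged. The variation budget $A$ is simply discarded, since it only makes the class smaller.

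For this piecewise-linear graph class $\cG$ with $W=O(Lw^2)$ parameters and depth $O(L)$, I will invoke the Bartlett--Harvey--Liaw--Mehrabian bound
\[
\Pdim(\cG)=O\bigl(W\cdot L\log W\bigr)=O\bigl(L^2w^2\log(2Lw)\bigr).
\]
This is exactly where the two factors of $L$ appear, one from the parameter count and one from the depth. Clipping does not increase the pseudodimension. A Haussler-type population covering bound for $[-B,B]$-valued classes then gives, for $\delta\in(0,B]$,
\[
\log N\bigl(\delta,\cG_B,L^2(\mu)\bigr)\lesssim \Pdim\cdot\log(eB/\delta).
\]
This is a deterministic net with respect to $\mu$, which is known. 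Let $\cF_\delta\subset\cG_B$ be such a net, so every element is bounded by $B$.

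Next comes a finite-class oracle inequality for least squares with Gaussian noise. Let $\widehat f=\arg\min_{g\in\cF_\delta}\sum_i(Y_i-g(T_i))^2$, and let $g^\dagger\in\cF_\delta$ be the net point closest to $f^\star$. I expect to show
\[
\E\norm{\widehat f-f^\star}^2_{L^2(\mu)}\lesssim \delta^2+\frac{(\sigma^2+B^2)\log|\cF_\delta|}{n}.
\]
The route is the standard one: for each fixed $g$, the excess empirical loss $\ell_g-\ell_{g^\dagger}$ is a sum of i.i.d.\ terms. It has a bounded part with range $O(B^2)$ and variance $\lesssim B^2\norm{g-g^\dagger}^2$, and a Gaussian cross term $2\xi_i(g^\dagger-g)(T_i)$, which is conditionally sub-Gaussian with variance proxy $\sigma^2\norm{g-g^\dagger}_n^2$. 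A Bernstein bound for the first part, a Chernoff bound for the second, and a union bound over $\cF_\delta$ give a high-probability version, which integrates to the expectation bound. Taking $\delta^2=B^2/n$ then gives $\log|\cF_\delta|\lesssim L^2w^2\log(2Lw)\log(en)$, which yields~\eqref{eq:upper-main}.

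I expect the main obstacle to be handling unbounded Gaussian responses directly without losing more than a constant. Bernstein's inequality must be combined with the Gaussian cross term, and the empirical norm $\norm{\cdot}_n$ must be related to $L^2(\mu)$ in the self-normalized step. I would first try the conditional Gaussian argument given $T_{1:n}$: the cross term is exactly Gaussian with variance $4\sigma^2\sum_i(g-g^\dagger)^2(T_i)$. Then I would use a one-sided Bernstein bound, valid because the terms are bounded by $4B^2$, to show that $\norm{g-g^\dagger}_n^2\geq\tfrac12\norm{g-g^\dagger}^2-O(B^2\log|\cF_\delta|/n)$ uniformly over the net. A secondary check is that the computation-graph realization really has depth $O(L)$, not $O(L)$ times something width-dependent, because the pseudodimension bound is sensitive to depth.
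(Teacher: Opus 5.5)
Your proposal is correct and follows the paper's route: realize the class as a computation graph, apply the Bartlett--Harvey--Liaw--Mehrabian pseudodimension bound $O(W_{\rm par}L\log W_{\rm par})$, take a Haussler-type $L^2(\mu)$ net, prove a finite-class least-squares oracle inequality for Gaussian noise, and use the zero estimator for the $B^2$ branch. The differences are only technical, and both versions of each step work. You clip the unconstrained graph class instead of netting $\cC_{L,w}(A,B)$ directly. You take $\delta^2=B^2/n$ instead of $\eps^2=(\sigma^2+B^2)V/n$. You prove the oracle inequality against the net point $g^\dagger$ conditionally on the design. The paper instead centers at $f^\star$ via $Z_i(g)=d_g(X_i)^2-2\xi_i d_g(X_i)$, whose variance is controlled by its own mean $r_g$, and applies an unconditional sub-exponential Bernstein bound. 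That avoids the triangle-inequality and empirical-to-population bookkeeping you anticipate.
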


Consequently, under Corollary~\ref{cor:n-dependent},
\begin{equation}
  c\frac{L^2w^2\log w\,R^2}{n}
  \leq \Risk_n^*
  \leq
  C\frac{L^2w^2\log(Lw)\log(en)\,R^2}{n}.
  \label{eq:matched-rate}
\end{equation}
Thus the polynomial depth exponent is quadratic; the remaining discrepancy is logarithmic.

\begin{remark}[The radius condition is structural]
For $f=s_L\circ\cdots\circ s_1$, the Parhi--Nowak Lipschitz estimate and AM--GM give
\[
  \Lip(f)\leq\prod_{\ell=1}^L\norm{s_\ell}_{\RBV}
  \leq(A/L)^L.
\]
Small layer-sum budget can therefore collapse the nonconstant part of the class.  An all-radius lower bound proportional to $L^2w^2B^2/n$ would be false.
\end{remark}

%% file: sections/04_proof_overview.tex
\section{Where the extra depth factor lives}
\label{sec:overview}

The lower and upper bounds in Section~\ref{sec:results} determine the polynomial depth exponent.  Before constructing the packing, we identify the ordered-pair structure through which the second factor of depth enters the usual layerwise entropy calculation.

The motivating upper bound acquires depth in a single generic step:
\[
  W_{\rm par}=O(Lw^2),
  \qquad
  \Pdim=O(W_{\rm par}L\log W_{\rm par}).
\]
Norm-based capacity bounds provide a different parameter-space route to depth-dependent generalization estimates \citep{neyshabur2015norm,golowich2018size}.  The question here is whether the layer-sum function-space constraint still contains a local packing of quadratic depth complexity.  Reconstructing a layerwise covering calculation shows the same ordered-pair structure.  Suppose layer $\ell$ has entropy
$H_\ell(\delta_\ell)\lesssim p_\ell\log(C_\ell/\delta_\ell)$ and downstream Lipschitz amplification
$A_\ell=\prod_{j>\ell}a_j$.  A telescoping decomposition gives
\begin{equation}
  \norm{s_L\circ\cdots\circ s_1-
  \widetilde s_L\circ\cdots\circ\widetilde s_1}
  \leq\sum_{\ell=1}^{L}A_\ell\delta_\ell.
  \label{eq:telescoping-error}
\end{equation}
The entropy-minimizing allocation under total error $\eps$ is
$\delta_\ell=\eps p_\ell/(P A_\ell)$, $P=\sum_\ell p_\ell$, and produces
\begin{align}
  \sum_{\ell=1}^{L}p_\ell\log A_\ell
  &=\sum_{\ell=1}^{L}p_\ell\sum_{j>\ell}\log a_j \\
  &=\sum_{j=2}^{L}\left(\sum_{\ell<j}p_\ell\right)\log a_j.
  \label{eq:second-layer-sum}
\end{align}
In the homogeneous case $p_\ell=p$ and $a_j=a>1$, this becomes
\begin{equation}
  \sum_{j=2}^{L}\left(\sum_{\ell<j}p\right)\log a
  =p\log a\sum_{j=2}^{L}(j-1)
  =\frac{pL(L-1)}{2}\log a.
  \label{eq:homogeneous-second-sum}
\end{equation}
Thus the second factor of depth counts ordered pairs consisting of a perturbed layer and a downstream amplification layer.  For $p\asymp w^2$, the resulting contribution is $\Theta(L^2w^2)$.  The calculation identifies the source of the quadratic term in a layerwise covering argument; necessity requires a function-space lower bound, since a global parametrization could in principle avoid this bookkeeping.

The packing below provides such a lower bound.  At its active amplitude $\lambda$, it lies in an $O(\lambda)$ ball, has $\Omega(\lambda)$ pairwise separation, and has log-cardinality $\Omega(L^2w^2\log w)$.  Consequently, the localized covering entropy at this scale is at least $\Omega(L^2w^2\log w)$, precluding a uniform covering-entropy upper bound of order $O(Lw^2)$ in the statistical regime.

\begin{figure}[H]
\centering
\resizebox{0.99\linewidth}{!}{%
\begin{tikzpicture}[
  node distance=7mm and 8mm,
  box/.style={draw,rounded corners,align=center,inner sep=5pt,font=\small},
  arr/.style={-{Latex[length=2mm]},thick}
]
\node[box] (code) {Binary grid code\\$\log|\cZ|=\Omega(M)$};
\node[box,right=of code] (approx) {Bias-corrected unit-coefficient approximation\\$\|N_z-h_z/M\|_\infty\lesssim M^{-1}$};
\node[box,right=of approx] (amp) {Balanced amplification\\cost $\lesssim Dw^2(\lambda M)^{1/D}$};
\node[box,right=of amp] (fano) {Gaussian Fano\\risk $\gtrsim\lambda^2$};
\draw[arr] (code) -- (approx);
\draw[arr] (approx) -- (amp);
\draw[arr] (amp) -- (fano);
\node[below=4mm of approx,align=center,font=\small] {$M=\Theta(D^2m^2\log m)$};
\end{tikzpicture}%
}
\caption{Proof mechanism.  The constant channel is used twice: to repair coefficient rescaling in the approximation theorem and to amplify the statistical code without concentrating the gain in one layer.}
\label{fig:mechanism}
\end{figure}

%% file: sections/05_packing.tex
\section{A depth-enabled local code}
\label{sec:packing-main}

Let $x_j=j/M$, $j=0,\ldots,M$.  For $z\in\{0,1\}^{M-1}$, set $z_0=z_M=0$ and let $h_z$ linearly interpolate $(x_j,z_j)$.  Then $\norm{h_z}_\infty\leq1$, $\Lip(h_z)\leq M$, and $g_z=h_z/M$ is bounded and $1$-Lipschitz.  Theorem~\ref{thm:corrected-approx} gives $N_z\in\mathcal N(m,D,1)$ with
\begin{equation}
  \norm{N_z-g_z}_\infty
  \leq \frac{C_{\rm app}}{m^2D^2\log m}.
  \label{eq:bounded-approx}
\end{equation}
For $M=\lfloor c_0m^2D^2\log m\rfloor$ and sufficiently small $c_0$, the rescaled functions
$Q_z=MN_z$ obey
\begin{equation}
  \norm{Q_z-h_z}_\infty\leq\eta
  \label{eq:constant-approx}
\end{equation}
for a fixed small universal $\eta$.

If $d_j=z_j-z_j'$, direct integration on the $j$th cell gives
\begin{equation}
  \int_{j/M}^{(j+1)/M}(h_z-h_{z'})^2\dd x
  =\frac{d_j^2+d_jd_{j+1}+d_{j+1}^2}{3M}.
  \label{eq:cell-integral}
\end{equation}
Since $a^2+ab+b^2\geq(a^2+b^2)/2$ and $d_0=d_M=0$,
\begin{equation}
  \norm{h_z-h_{z'}}_2^2\geq\frac{d_H(z,z')}{3M}.
  \label{eq:hamming-l2}
\end{equation}
The Varshamov--Gilbert bound~\citep{tsybakov2009introduction} therefore yields $\cZ\subseteq\{0,1\}^{M-1}$ such that
\begin{equation}
  \log|\cZ|\geq cM,
  \qquad
  c\leq\norm{Q_z-Q_{z'}}_2\leq C
  \quad(z\neq z').
  \label{eq:Q-code}
\end{equation}
The code therefore has the required cardinality and local geometry.  The remaining task is to amplify it to scale $\lambda$ without exhausting the layer-sum budget.

%% file: sections/06_translation.tex
\section{Balanced amplification and \texorpdfstring{$\mathcal{R}\mathrm{BV}^{2}$}{RBV2} translation}
\label{sec:translation-main}

Since $Q_z=MN_z$, reaching separation of order $\lambda$ requires a gain $q=\lambda M$ relative to the unit-coefficient approximant.  Applying this gain entirely in the final layer would charge order $q$ to one block.  Balanced amplification distributes the same gain across depth while a constant homogeneous coordinate transports every bias exactly.

\begin{lemma}[Balanced amplification]
\label{lem:balanced-main}
Let $N\in\mathcal N(m,D,1)$ have exact depth $D\geq2$.  For every $q>0$, $qN$ has a depth-$D$, width-at-most-$(m+1)$ realization with coefficient magnitude at most $q^{1/D}$.
\end{lemma}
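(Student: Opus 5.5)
The plan is a homogeneous lift followed by uniform rescaling. Write $N$ as in~\eqref{eq:standard-network}, with affine maps $(A_\ell,b_\ell)$ for $\ell=1,\ldots,D$, and set $s=q^{1/D}$. The first step appends a constant coordinate to every hidden state. Put $\tilde h_0=x$ and, for $1\le\ell<D$, define $\tilde h_\ell=(h_\ell^{(s)},\,c_\ell)\in\R^{m+1}$. Here $h^{(s)}_\ell$ is the hidden state of the rescaled network, and $c_\ell$ is a constant channel that ReLU leaves unchanged because it is nonnegative. The constant channel is needed because multiplying both $A_\ell$ and $b_\ell$ by $s$ does not produce homogeneous scaling of the output. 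Biases accumulate different powers of $s$ at different depths, and this is exactly the defect the paper attributes to the published rescaling step. I would therefore route every bias through a channel whose own magnitude grows geometrically with depth.

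Concretely, I aim for the invariant $h^{(s)}_\ell=s^\ell h_\ell$ together with $c_\ell=s^\ell$.
\begin{itemize}
\item Layer $1$ acts on the scalar input $x$. It uses weight $sA_1$ and bias $sb_1$ for the main channels, and weight $0$ with bias $s$ for the constant channel. This gives $h^{(s)}_1=\relu(sA_1x+sb_1)=sh_1$ by positive homogeneity of $\relu$, and $c_1=\relu(s)=s$. Every entry is bounded by $s$ because $|A_1|,|b_1|\le1$.
\item For $2\le\ell<D$, the block matrix is
\[
\begin{pmatrix} sA_\ell & sb_\ell\\ 0 & s\end{pmatrix}
\]
with zero bias. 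Applied to $(s^{\ell-1}h_{\ell-1},s^{\ell-1})$, it returns $\relu(s^\ell(A_\ell h_{\ell-1}+b_\ell))=s^\ell h_\ell$ and $\relu(s^\ell)=s^\ell$.
\item The output layer uses weights $(sA_D,\;sb_D)$ and zero bias. It produces $s^D(A_Dh_{D-1}+b_D)=s^DN(x)=qN(x)$.
\end{itemize}
All entries are bounded by $s=q^{1/D}$, the depth stays $D$, and the width is at most $m+1$. The proof is a short induction on $\ell$ using $\relu(tu)=t\relu(u)$ for $t>0$.

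The step needing care is the bookkeeping at the boundary layers, and this is where I expect the main obstacle. The first layer has no constant channel in its input, so its bias must be scaled directly by $s$ rather than routed; this is harmless because it is used only once, at power $s^1$. The output layer, in turn, must route $b_D$ through the channel carrying $s^{D-1}$. The assumption $D\ge2$ guarantees that at least one hidden layer exists to host the constant channel. Finally, I would check that no entry exceeds $s$ when $q<1$, so $s<1$. This holds since every entry is either $s$ or $s$ times a coefficient of magnitude at most one.
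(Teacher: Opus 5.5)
Your proposal is correct and matches the paper's proof: the augmented state $\widetilde h_\ell=(s^\ell h_\ell,s^\ell)$ with $s=q^{1/D}$, the first layer creating the constant channel through a bias $s$, the zero-bias middle blocks, and the output row $(sA_D,\ sb_D)$ are exactly the matrices of Lemma~\ref{lem:homogeneous-lift} and Appendix~\ref{app:balanced}. Your coefficient check, including the case $q<1$, also agrees with the paper's observation that the bound $s$ holds for every $q>0$.
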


Writing $s=q^{1/D}$, the augmented state is
$\widetilde h_\ell=(s^\ell h_\ell,s^\ell)$.  The extra coordinate transports the scaled biases, and the final affine map returns $s^DN=qN$.  Appendix~\ref{app:balanced} gives the matrices and exact-depth padding.

\begin{lemma}[Layer translation]
\label{lem:layer-rbv-main}
Let $T(x)=\relu(Ax+b)$ coordinatewise, with input and output dimensions at most $w$ and $|A_{ij}|,|b_i|\leq s$.  Then $T$ is an allowed vector-valued block with at most $w$ atoms and
\begin{equation}
  \norm{T}_{\RBV}\leq Cw^2s.
  \label{eq:layer-rbv}
\end{equation}
The same estimate holds for an affine output map.
\end{lemma}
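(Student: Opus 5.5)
We write $T$ directly in the block form~\eqref{eq:rbv-block}, using one ReLU atom per output coordinate, and then evaluate the two parts of the norm~\eqref{eq:vector-rbv-norm} by hand. Let the input dimension be $d\le w$ and the output dimension $D'\le w$, and let $a_i^\top$ be the $i$th row of $A$. For each $i$ take the atom
\[
  v_i=e_i\in\R^{D'},\qquad w_i=a_i,\qquad b_i^{\rm atom}=-b_i,
\]
and take $C=0$, $c_0=0$. Then
\[
  s(x)=\sum_{i=1}^{D'}e_i\relu(a_i^\top x+b_i)=T(x)
\]
exactly, with $K=D'\le w$ atoms, so $T$ is an allowed block. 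If some row $a_i$ vanishes, the $i$th coordinate is the constant $\relu(b_i)$. We can still keep the atom with $w_i=0$, which is harmless for the norm; alternatively we absorb it into $c_0$, which costs at most $s$ per coordinate in the affine part.

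\textbf{Atom part.} We have $\norm{v_i}_1=1$ and $\norm{a_i}_2\le\sqrt d\,s\le\sqrt w\,s$. Summing over at most $w$ atoms gives at most $w^{3/2}s\le w^2 s$.

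\textbf{Point-evaluation part.} The function $\relu$ is $1$-Lipschitz with $\relu(0)=0$. Hence $|s_j(0)|=\relu(b_j)\le s$. For the increments,
\[
  |s_j(e_r)-s_j(0)|\le|A_{jr}|\le s .
\]
Each output coordinate therefore contributes at most $s+ds\le(w+1)s$. Summing over $D'\le w$ coordinates gives at most $w(w+1)s\le 2w^2s$. Adding the two parts yields~\eqref{eq:layer-rbv} with $C=3$.

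\textbf{Affine output map.} For $x\mapsto Ax+b$ we use no atoms, setting $C=A$ and $c_0=b$. The point-evaluation sum is then $\sum_j\bigl(|b_j|+\sum_r|A_{jr}|\bigr)\le w(w+1)s$, giving the same bound.

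\textbf{Main obstacle.} The only step needing care is matching the sign convention $\relu(w_k^\top x-b_k)$ and the hidden-width constraint $K_\ell\le w$. The sign issue is handled by taking $b_k=-b_i$. The width constraint holds because the number of atoms equals the output dimension, which is at most $w$.
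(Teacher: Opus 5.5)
Your proposal is correct and follows the paper's proof essentially line by line. Both use one atom $e_i\relu(a_i^\top x-(-b_i))$ per output coordinate, bound the atom term by $d'\sqrt d\,s$ and the anchor term by $d'(d+1)s$ via the $1$-Lipschitz property of $\relu$ to get $C=3$, and use the atom-free representation $C=A$, $c_0=b$ for the affine output map. Your side remark on zero rows is harmless: an atom with $w_k=0$ contributes nothing to the variation term, and the anchor term depends only on function values, so absorbing such a constant into $c_0$ adds no extra cost either.
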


Thus, for $N\in\mathcal N(m,D,1)$,
\begin{equation}
  \VarCost_{D,m+1}(qN)
  \leq C_{\rm tr}D(m+1)^2q^{1/D}.
  \label{eq:amplified-cost}
\end{equation}

To embed the interval code on the circle, use
\begin{equation}
  r(t)=t-2\relu(t-1),\qquad 0\leq t\leq2.
  \label{eq:tent-map}
\end{equation}
It has univariate $\RBV$ norm $C_0=3$, maps both endpoints to zero, and traverses $[0,1]$ once in each direction.  For
\begin{equation}
  F_z(t)=\lambda Q_z(r(t)),
  \label{eq:Fz}
\end{equation}
we have
\begin{equation}
  \norm{F_z-F_{z'}}_{L^2([0,2),\dd t/2)}
  =\lambda\norm{Q_z-Q_{z'}}_{L^2([0,1])}
  \label{eq:circle-isometry}
\end{equation}
and, using $D=L-1$, $m+1=w$,
\begin{equation}
  \VarCost_{L,w}(F_z)
  \leq C_0+C_{\rm tr}Dw^2(\lambda M)^{1/D}.
  \label{eq:Fz-cost}
\end{equation}
Equation~\eqref{eq:Fz-cost} completes the representation step.  We now combine this bound with the output and Gaussian testing constraints.

%% file: sections/07_lower_bound.tex
\section{From the local code to the minimax lower bound}
\label{sec:lower-main}

Three constraints determine the admissible amplitude.  First,~\eqref{eq:constant-approx} gives $\norm{Q_z}_\infty\leq1+\eta$, so the output cap is satisfied whenever
\begin{equation}
  \lambda\lesssim B.
  \label{eq:output-amplitude}
\end{equation}
Second,~\eqref{eq:Fz-cost} places the code in the layer-sum ball when $A>C_0$ and
\begin{equation}
  \lambda\lesssim
  \frac1M\left(\frac{A-C_0}{C_{\rm tr}Dw^2}\right)^D.
  \label{eq:representation-amplitude}
\end{equation}
When $A\leq C_0$, the positive-part convention in the theorem records the resulting trivial lower bound.
Finally, the Gaussian testing scale is determined by the size of the code.  Since $\log|\cZ|\gtrsim M$ and the pairwise divergence is of order $n\lambda^2/\sigma^2$, Fano's inequality requires
\begin{equation}
  \lambda\lesssim\sigma\sqrt{\frac{M}{n}}.
  \label{eq:testing-amplitude}
\end{equation}
We therefore choose
\begin{equation}
  \lambda=c_\lambda
  \min\left\{
    B,
    \sigma\sqrt{\frac{M}{n}},
    \frac1M\left(
      \frac{\pos{A-C_0}}{C_{\rm tr}Dw^2}
    \right)^D
  \right\}.
  \label{eq:lambda-choice}
\end{equation}
Up to universal constants,~\eqref{eq:lambda-choice} is the largest amplitude compatible with all three restrictions.

After reducing $c_\lambda$, equations~\eqref{eq:constant-approx} and~\eqref{eq:Fz-cost} imply $F_z\in\cC_{L,w}(A,B)$.  Equations~\eqref{eq:Q-code} and~\eqref{eq:circle-isometry} give
\begin{equation}
  c\lambda\leq\norm{F_z-F_{z'}}_2\leq C\lambda,
  \qquad \norm{F_z}_2\leq C\lambda.
  \label{eq:F-code-geometry}
\end{equation}
Thus the same family is both local and separated at the chosen amplitude.  For Gaussian random-design regression,
\begin{equation}
  \KL(P_z^{\otimes n}\|P_{z'}^{\otimes n})
  =\frac{n}{2\sigma^2}\norm{F_z-F_{z'}}_2^2
  \leq C\frac{n\lambda^2}{\sigma^2}.
  \label{eq:gaussian-kl}
\end{equation}
The testing restriction~\eqref{eq:testing-amplitude} makes this a sufficiently small multiple of $M\lesssim\log|\cZ|$.  Nearest-neighbor decoding and Fano's inequality~\citep{tsybakov2009introduction} then yield risk $\Omega(\lambda^2)$, proving Theorem~\ref{thm:general-lower}.

For Corollary~\ref{cor:n-dependent}, $n\geq M$ makes the testing amplitude at most a constant multiple of $R$.  If $R\geq2C_0$, then $R-C_0\geq R/2$, and~\eqref{eq:n-dependent-radius} implies
\[
  \frac1M\left(\frac{R-C_0}{C_{\rm tr}Dw^2}\right)^D
  \geq cR\sqrt{\frac{M}{n}}.
\]
Hence $\lambda^2\asymp R^2M/n$.  The uniform condition~\eqref{eq:uniform-radius} follows because $M^{3/2}/\sqrt n\leq M$ for $n\geq M$.  Complete constants appear in Appendix~\ref{app:fano}.

%% file: sections/08_upper_bound.tex
\section{A Gaussian upper bound without bounded responses}
\label{sec:upper-main}

To match the lower bound in its polynomial depth dependence, we now derive a Gaussian upper bound for the same architecture.

\begin{lemma}[Architecture-to-computation graph]
\label{lem:graph-main}
Every function in $\cC_{L,w}(A,B)$ is realized by a piecewise-linear computation graph with at most $C Lw^2$ real parameters and computational depth at most $CL$.  Consequently,
\begin{equation}
  \Pdim(\cC_{L,w}(A,B))
  \leq C L^2w^2\log(2Lw).
  \label{eq:pdim-bound}
\end{equation}
\end{lemma}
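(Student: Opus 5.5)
The plan is to show that every $f\in\cC_{L,w}(A,B)$ is a composition of blocks of the form~\eqref{eq:rbv-block}, and then to bound the pseudodimension of the resulting parameterized family with the standard piecewise-linear bound of Bartlett et al.\ (pseudodimension $O(WD\log W)$ for $W$ parameters and depth $D$).

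\textbf{Step 1: one parameterized family.} The cost $\VarCost_{L,w}(f)\leq A$ is an infimum, so it need not be attained. Its only role here is membership: any $f$ with finite cost equals some composition $s_L\circ\cdots\circ s_1$ satisfying~\eqref{eq:intermediate-dims} and $K_\ell\leq w$. The budget $A$ is therefore discarded and the class is enlarged to all such compositions. The intermediate dimensions $d_\ell\leq w$ and atom counts $K_\ell\leq w$ can vary, so I pad every block to $d_\ell=w$ and $K_\ell=w$ using zero rows and columns. This embeds every admissible shape into one fixed architecture without changing the realized function. The output constraint $\norm f_\infty\leq B$ is dropped as well, since it only shrinks the class.

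\textbf{Step 2: the computation graph.} Block $\ell$ computes
\[
x\mapsto V_\ell\relu(W_\ell x-b_\ell)+C_\ell x+c_\ell .
\]
This is a depth-2 piecewise-linear graph with a skip connection. Its parameters are $W_\ell,V_\ell,C_\ell\in\R^{w\times w}$ and $b_\ell,c_\ell\in\R^w$, so it has at most $3w^2+2w$ parameters. Stacking $L$ blocks gives a graph with at most $CLw^2$ parameters and computational depth at most $2L$. Each unit has at most $w$ breakpoints in its activation, and the skip connections do not increase the depth.

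\textbf{Step 3: the pseudodimension bound.} For a scalar-output family $f_\theta(t)$, the pseudodimension equals the VC dimension of the threshold class $(t,y)\mapsto\mathbf 1[f_\theta(t)-y>0]$. Subtracting $y$ adds one affine output node. I then apply the Bartlett--Harvey--Liaw--Mehrabian argument for piecewise-polynomial networks.
\begin{itemize}[leftmargin=*,itemsep=1pt,topsep=1pt]
  \item Partition parameter space layer by layer, using Warren's bound, into regions on which every unit's sign pattern is fixed for the $N$ sample points.
  \item On each region, $f_\theta$ is a polynomial in $\theta$ of degree at most the depth.
  \item Count sign patterns across all regions and solve $2^N\leq$ (number of patterns) for $N$.
\end{itemize}
This yields $\Pdim\leq C\,W_{\rm par}\,\bar L\log W_{\rm par}$, where $\bar L=O(L)$ is the depth. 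This is the generic bound quoted in Section~\ref{sec:overview}. Substituting $W_{\rm par}\leq CLw^2$ gives
\[
\log W_{\rm par}\leq C\log(2Lw),
\]
and hence the bound~\eqref{eq:pdim-bound}.

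\textbf{Main obstacle.} The delicate point is Step~3's treatment of skip connections. The Bartlett et al.\ counting assumes layered feedforward networks, so the linear term $C_\ell x$ must be handled explicitly. There are two ways to do this.
\begin{itemize}[leftmargin=*,itemsep=1pt,topsep=1pt]
  \item \emph{Rewrite the skip with ReLUs.} Use the identity $x=\relu(x)-\relu(-x)$ to turn each block into a plain two-layer ReLU layer of width $3w$. This keeps the parameter count $O(Lw^2)$ and the depth $2L$.
  \item \emph{Rerun the counting on a DAG.} Redo the polynomial-degree counting over a general DAG, which works because degrees are still bounded by the longest path.
\end{itemize}
I would take the first route, since it directly yields a standard network.

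A second check is that padding and the varying-dimension infimum do not change the counting. They do not, because all admissible shapes embed into the fixed width-$w$ architecture.
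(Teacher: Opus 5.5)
Your proposal is correct and follows the paper's proof essentially step for step. You discard the cost and sup-norm constraints because they only shrink the class, and you pad all admissible shapes into one maximal graph with at most $L(3w^2+2w)$ parameters and depth $2L$, using two affine stages per block with the skip carried in parallel. You optionally remove the skips via $u=\relu(u)-\relu(-u)$, and then apply the piecewise-linear pseudodimension bound of Bartlett et al. One harmless slip: each ReLU unit has a single breakpoint, not up to $w$, and this plays no role in the counting.
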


The proof in Appendix~\ref{app:class} explicitly handles vector-valued blocks, affine skips, and variable intermediate dimensions.  It then applies the piecewise-linear pseudodimension theorem of \citet{bartlett2019nearly}.  For every design distribution $P$, a $[-B,B]$-valued class of pseudodimension $V$ satisfies
\begin{equation}
  \log N(\eps,\cC_{L,w}(A,B),L^2(P))
  \leq CV\log\frac{CB}{\eps}.
  \label{eq:pdim-cover}
\end{equation}

For a finite $\cG\subset[-B,B]$, least squares under
$Y=f^\star(X)+\xi$, $\xi\sim N(0,\sigma^2)$, obeys
\begin{equation}
  \E\norm{\widehat g-f^\star}_2^2
  \leq C\inf_{g\in\cG}\norm{g-f^\star}_2^2
  +C\frac{(\sigma^2+B^2)\log(2|\cG|)}{n}.
  \label{eq:finite-class-oracle-main}
\end{equation}
The Gaussian multiplier/Bernstein argument applies directly to unbounded Gaussian responses.  Taking a population-$L^2$ $\eps$-net, using~\eqref{eq:pdim-cover}, and optimizing $\eps$ yields Theorem~\ref{thm:upper}; Appendix~\ref{app:upper} gives the full proof.  Together with Corollary~\ref{cor:n-dependent}, Theorem~\ref{thm:upper} fixes the polynomial depth exponent.  We next interpret the resulting local entropy bound and the competition among the output, testing, and representation scales.

%% file: sections/09_discussion.tex
\section{Consequences and open directions}
\label{sec:discussion}

The packing also yields an explicit localized covering lower bound.  At the amplitude selected in~\eqref{eq:lambda-choice}, all codewords lie in an $L^2$ ball of radius $C\lambda$, distinct codewords are at least $c\lambda$ apart, and
\[
  \log N\!\left(c\lambda,
  \cC_{L,w}(A,B)\cap\{f:\norm{f}_2\leq C\lambda\},L^2\right)
  \geq cM
  =\Omega(L^2w^2\log w).
\]
Hence the localized entropy at the testing scale is quadratic in depth, precluding a uniform covering-entropy upper bound of order $O(Lw^2)$ in the statistical regime.  This local statement complements sharp entropy results for shallow neural variation spaces \citep{siegel2024sharp} and the general entropy--minimax correspondence \citep{yang1999information}.

\subsection{Radius dependence of the lower bound}
Theorem~\ref{thm:general-lower} can be read as a competition among three amplitudes:
\begin{equation}
  \lambda_{\rm out}=B,
  \qquad
  \lambda_{\rm stat}=\sigma\sqrt{M/n},
  \qquad
  \lambda_{\rm rep}=\frac1M
  \left(\frac{\pos{A-C_0}}{C_{\rm tr}Dw^2}\right)^D.
  \label{eq:three-amplitudes-discussion}
\end{equation}
If $\lambda_{\rm out}$ is smallest, the lower bound saturates at the output-diameter scale $B^2$.  If $\lambda_{\rm stat}$ is smallest, ordinary Gaussian testing is active and the risk is $\Omega(\sigma^2M/n)$.  Under $A=B=R$ and $\sigma\asymp R$, this is the quadratic-depth regime of Corollary~\ref{cor:n-dependent}, namely the statistical regime.

If $\lambda_{\rm rep}$ is smallest, the same construction gives
\begin{equation}
  \Risk_n^*(A,B,\sigma)
  \gtrsim
  \frac1{M^2}
  \left(\frac{\pos{A-C_0}}{C_{\rm tr}Dw^2}\right)^{2D}.
  \label{eq:representation-branch-discussion}
\end{equation}
Equation~\eqref{eq:representation-branch-discussion} identifies the representation-limited scale at which the layer-sum budget prevents the bit-extraction code from reaching the Gaussian testing scale.  The transition is structural: the block Lipschitz estimate gives
\begin{equation}
  \Lip(f)
  \leq\prod_{\ell=1}^{L}\norm{s_\ell}_{\RBV}
  \leq\left(\frac{A}{L}\right)^L,
  \label{eq:lipschitz-collapse-discussion}
\end{equation}
so a small ratio $A/L$ can collapse the class exponentially with depth.  The radius transition therefore reflects the geometry of the class rather than only the proof technique.

\subsection{Limitations and open questions}
Several questions remain.  The lower bound contains $\log w$, whereas the upper bound contains $\log(Lw)\log n$; closing these logarithmic factors may require a sharper localized upper bound or a larger packing.  The bounded-coefficient approximation theorem assumes width and depth above universal thresholds, leaving the smallest-width cases open.  Finite-precision parameter classes may exhibit a different transition because the bit-extraction mechanism uses real parameters at fine resolution.

The theorem concerns the explicit vector-valued Parhi--Nowak architecture consistent with the $O(Lw^2)$ parameterization in the motivating work; a literal scalar-to-scalar chain has only $O(Lw)$ parameters and defines a different minimax problem.  A sharp $A$-dependent upper bound across all three regimes in~\eqref{eq:three-amplitudes-discussion} remains open.  The local packing of entropy $\Omega(L^2w^2\log w)$ therefore survives the layer-sum variation constraint at the Gaussian testing scale.  Under the radius condition of Corollary~\ref{cor:n-dependent}, the minimax risk has quadratic, rather than linear, polynomial dependence on depth, up to logarithmic factors.

%% file: sections/A_notation_and_class.tex
\section{Architecture conventions and computation-graph realization}
\label{app:class}

\subsection{Vector-valued class and relation to the motivating formulation}
For
$s(x)=\sum_{k=1}^{K}v_k\relu(w_k^\top x-b_k)+Cx+c_0$,
we use~\eqref{eq:vector-rbv-norm}.  The Parhi--Nowak deep space composes vector-valued maps across dimensions $d_0,\ldots,d_L$ \citep{parhi2022kinds}.  The motivating work uses an $O(Lw^2)$ parameter count, while its displayed base-block formula does not explicitly display these dimensions \citep{ganguli2026dichotomy}.  Throughout this paper, all class inclusions refer to the explicit architecture
\[
  d_0=d_L=1,\qquad d_\ell\leq w,\qquad K_\ell\leq w.
\]

\subsection{Exact parameter count}
A block with input dimension $d$, output dimension $D'$, and $K$ atoms has
\[
  D'K+Kd+K+D'd+D'
\]
scalar parameters.  If $d,D',K\leq w$, this is at most $3w^2+2w$.  Hence
\begin{equation}
  W_{\rm par}\leq L(3w^2+2w).
  \label{eq:parameter-count-full}
\end{equation}

\subsection{Proof of Lemma~\ref{lem:graph-main}}
For each block, compute the $K$ preactivations $w_k^\top x-b_k$ in one affine stage, apply $K$ ReLUs, and compute $\sum_kv_k\relu(\cdot)+Cx+c_0$ in a second affine stage.  The skip $Cx$ is carried in parallel through the same block and adds no nonlinear depth.  Thus an $L$-block composition is a piecewise-linear computation graph with at most the parameters in~\eqref{eq:parameter-count-full}, at most $Lw$ ReLU gates, and computational depth at most $2L$.

If a theorem is stated for ordinary feed-forward ReLU networks without affine skips, represent a scalar $u$ by $(\relu(u),\relu(-u))$ and propagate both signs.  This converts every affine skip into a constant-factor larger ReLU graph, preserving $O(Lw^2)$ parameters and $O(L)$ computational depth.  The finitely many choices of dimensions and atom counts are all subarchitectures of the maximal padded graph, obtained by setting unused parameters to zero.

The piecewise-linear pseudodimension bound of \citet{bartlett2019nearly}, applied to this maximal graph, gives
\[
  \Pdim\leq C W_{\rm par}L\log(2W_{\rm par})
  \leq C L^2w^2\log(2Lw).
\]
The layer-cost and output constraints only restrict the graph class and cannot increase pseudodimension.

\subsection{Circle coordinate and representation cost}
The coordinate $t\in[0,2)$ is the motivating coordinate $t=\theta/\pi$ with normalized measure $\dd t/2$.  The tent map~\eqref{eq:tent-map} satisfies $r(0)=r(2)=0$, so the packed functions periodize continuously.

The infimum~\eqref{eq:deep-cost} is generally not one-homogeneous as a functional of the composed map: a scalar gain can be distributed across $D$ stages at cost proportional to $Dq^{1/D}$.  Accordingly, all arguments use only the representation-cost definition.

%% file: sections/B_corrected_approximation.tex
\section{A bias-corrected bounded-coefficient approximation theorem}
\label{app:corrected-approx}

This appendix proves Theorem~\ref{thm:corrected-approx}.  The bit decoder and approximation estimates are imported from the constructive proof of \citet{ou2024quantization}.  The coefficient-scaling steps needed to pass from that construction to the unit-coefficient theorem are rederived below under the affine-layer convention~\eqref{eq:standard-network}.

\subsection{The affine-bias obstruction and its repair}
Uniformly multiplying every affine pair $(A_\ell,b_\ell)$ by $a>0$ scales homogeneous terms and biases by different powers across depth.  For example, the depth-two realization
\[
  N(x)=\relu(x)+1
\]
becomes $a^2\relu(x)+a$, not $a^2N(x)$.  The last-layer bias receives only one factor of $a$.  A constant homogeneous coordinate repairs this mismatch.

\begin{lemma}[Homogeneous lift]
\label{lem:homogeneous-lift}
Let $B\geq1$ and let $N\in\mathcal N(W,D,B)$ have exact depth $D\geq2$.  For every $q>0$,
\begin{equation}
  qN\in\mathcal N(W+1,D,Bq^{1/D}).
  \label{eq:homogeneous-lift-inclusion}
\end{equation}
\end{lemma}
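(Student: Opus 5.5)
We prove the lemma with an explicit augmented-state construction. Write $N$ as in~\eqref{eq:standard-network}, with affine pairs $(A_\ell,b_\ell)$ for $1\leq\ell\leq D$, hidden states $h_\ell\in\R^{W_\ell}$ for $W_\ell\leq W$, and $h_0=x$. Set $s\defeq q^{1/D}>0$.

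The idea is to carry an augmented hidden state $\widetilde h_\ell=(s^\ell h_\ell,\,s^\ell)\in\R^{W_\ell+1}$ for $1\leq\ell<D$, with $\widetilde h_0=x$ since the input is not augmented. The extra coordinate is a constant channel whose value grows by exactly the factor $s$ at each layer. Every bias can therefore be moved into the linear part of the map, multiplied against that channel, and so it picks up the same power of $s$ as the homogeneous terms.

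\textbf{First layer.} Define
\[
\widetilde A_1=\begin{pmatrix} sA_1\\ 0\end{pmatrix},\qquad
\widetilde b_1=\begin{pmatrix} sb_1\\ s\end{pmatrix}.
\]
Then $\relu(\widetilde A_1x+\widetilde b_1)=(s\,\relu(A_1x+b_1),\relu(s))=(sh_1,s)$, using $s>0$ and positive homogeneity of $\relu$.

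\textbf{Middle layers.} For $2\leq\ell<D$, define
\[
\widetilde A_\ell=\begin{pmatrix} sA_\ell & sb_\ell\\ 0 & s\end{pmatrix},\qquad
\widetilde b_\ell=0.
\]
By induction on $\ell$,
\[
\relu\bigl(\widetilde A_\ell\widetilde h_{\ell-1}\bigr)
=\bigl(\relu(s^\ell(A_\ell h_{\ell-1}+b_\ell)),\ \relu(s^\ell)\bigr)
=(s^\ell h_\ell,\,s^\ell).
\]

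\textbf{Output layer.} Define $\widetilde A_D=(sA_D\ \ sb_D)$ and $\widetilde b_D=0$. This gives
\[
\widetilde A_D\widetilde h_{D-1}=s^D(A_Dh_{D-1}+b_D)=s^DN(x)=qN(x).
\]

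\textbf{Entry bounds.} Every entry of every $\widetilde A_\ell$ and $\widetilde b_\ell$ is one of $0$, $s$, $sA_{ij}$, or $sb_i$. Hence it is bounded by $s\max\{1,B\}=Bs=Bq^{1/D}$, where $B\geq1$ is used exactly for the constant-channel entries equal to $s$. The hidden width is at most $W+1$, and the number of affine maps is still $D$. This gives $qN\in\mathcal N(W+1,D,Bq^{1/D})$.

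\textbf{Edge cases.} The exact-depth hypothesis $D\geq2$ guarantees that the first and last layers are distinct, so the constant channel is created at layer $1$ and consumed at layer $D$. If some $W_\ell=W$, the augmented width $W+1$ is exactly what the inclusion allows. No step is a genuine obstacle; the only point needing care is that the first layer has no incoming constant channel. Its bias must therefore be scaled directly, as $sb_1$, and the constant channel must be initialized through the bias entry $s$, which again relies on $B\geq1$.
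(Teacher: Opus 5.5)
Your proposal is correct and matches the paper's proof essentially verbatim. It uses the same augmented state $(s^\ell h_\ell, s^\ell)$ with $s=q^{1/D}$ and the same first-layer, middle-layer, and output matrices. It also uses $B\geq1$ in the same way, to bound the new constant-channel entry $s$ by $sB$.
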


\begin{proof}
Put $s=q^{1/D}$ and write the hidden states as in~\eqref{eq:standard-network}.  Define
\begin{equation}
  \widetilde h_1=
  \relu\!\left(
  \begin{bmatrix}sA_1\\0\end{bmatrix}x+
  \begin{bmatrix}sb_1\\s\end{bmatrix}
  \right)
  =\begin{bmatrix}sh_1\\s\end{bmatrix},
  \label{eq:lift-first}
\end{equation}
and, for $2\leq\ell<D$,
\begin{equation}
  \widetilde h_\ell=
  \relu\!\left(
  \begin{bmatrix}sA_\ell&sb_\ell\\0&s\end{bmatrix}
  \widetilde h_{\ell-1}
  \right).
  \label{eq:lift-middle}
\end{equation}
Induction gives $\widetilde h_\ell=(s^\ell h_\ell,s^\ell)$.  The final affine map is
\begin{equation}
  \widetilde N(x)=
  \begin{bmatrix}sA_D&sb_D\end{bmatrix}\widetilde h_{D-1}
  =s^D(A_Dh_{D-1}+b_D)=qN(x).
  \label{eq:lift-final}
\end{equation}
The width increases by one.  Every old coefficient is multiplied by $s$, and the only new nonzero coefficient is $s$; since $B\geq1$, the new coefficient magnitude is at most $sB$.
\end{proof}

\subsection{Trading coefficient magnitude for depth}
We next use a unit-coefficient fan-out construction.

\begin{lemma}[Unit-coefficient fan-out]
\label{lem:fanout}
Let $W\geq2$, let $N\in\mathcal N(W,D,1)$, and let $J\geq0$.  Put $r=\lfloor W/2\rfloor$.  Then
\begin{equation}
  r^J N\in\mathcal N(W,D+J,1).
  \label{eq:fanout-inclusion}
\end{equation}
\end{lemma}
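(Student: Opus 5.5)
The plan is to append $J$ extra layers to $N$ that multiply its scalar output by $r$ each time, using only coefficients in $\{0,\pm1\}$ and zero biases. The case $J=0$ is trivial, so assume $J\geq1$. Write $u(x)=A_Dh_{D-1}+b_D$ for the output of $N$, so that $A_D$ is a row of at most $W$ entries of magnitude at most $1$, and $|b_D|\leq1$.

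\textbf{Sign splitting.} Since $u$ can have either sign, I will carry $u$ through ReLU layers as the pair $(\relu(u),\relu(-u))$, recovering it at the end from $u=\relu(u)-\relu(-u)$.

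\textbf{Fan-out layer.} The first $D-1$ hidden layers of $N$ are kept unchanged. The final affine map of $N$ is replaced by a hidden layer of width $2r\leq W$. Its first $r$ rows are $(A_D,b_D)$ and its last $r$ rows are $(-A_D,-b_D)$. Every entry therefore still has magnitude at most $1$, and this layer outputs $r$ copies of $\relu(u)$ followed by $r$ copies of $\relu(-u)$.

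\textbf{Amplification layers.} Next come $J-1$ further hidden layers of width $2r$ with zero bias. Each positive output coordinate is the sum, with coefficient $1$, of the $r$ positive coordinates of the previous layer; the negative block is treated the same way. The block matrix is
\[
\begin{bmatrix}\mathbf 1_r\mathbf 1_r^\top&0\\0&\mathbf 1_r\mathbf 1_r^\top\end{bmatrix}.
\]
All inputs to these ReLUs are nonnegative, so the ReLU acts as the identity on them. By induction, after the $k$th amplification layer every positive coordinate equals $r^k\relu(u)$ and every negative coordinate equals $r^k\relu(-u)$.

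\textbf{Output layer.} The final affine map has zero bias and the row $(\mathbf 1_r^\top,-\mathbf 1_r^\top)$. It returns
\[
r\cdot r^{J-1}\bigl(\relu(u)-\relu(-u)\bigr)=r^Ju.
\]

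\textbf{Verification.} Counting affine maps gives $(D-1)+1+(J-1)+1=D+J$, so the realization has depth $D+J$. Every hidden width is at most $\max\{W,2r\}=W$, and every coefficient lies in $[-1,1]$. Hence $r^JN\in\mathcal N(W,D+J,1)$. This argument also covers $D=1$, where the fan-out layer acts directly on the input $x$.

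The only step needing care is the depth and width bookkeeping: the sign split must be absorbed into the former output layer rather than added as an extra layer, and it must fit within width $2\lfloor W/2\rfloor$. The hypothesis $W\geq2$ guarantees $r\geq1$ for exactly this reason.
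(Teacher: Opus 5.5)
Your proposal is correct and follows the paper's proof essentially step for step. The paper likewise replaces the final affine map of $N$, padded to exact depth, by the sign-split fan-out layer $(\relu(y)\mathbf 1_r,\relu(-y)\mathbf 1_r)$, then applies $J-1$ zero-bias layers $\operatorname{diag}(\mathbf 1_{r\times r},\mathbf 1_{r\times r})$, and reads out with $(\mathbf 1_r^\top,-\mathbf 1_r^\top)$. Your coefficient bookkeeping is slightly more accurate than the paper's: the fan-out layer carries the entries $\pm A_D,\pm b_D$, so the coefficients lie in $[-1,1]$ rather than in $\{-1,0,1\}$ as the paper states.
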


\begin{proof}
The case $J=0$ is immediate.  For $J\geq1$, pad the input realization to exact depth $D$ and write its scalar output as $y=A_Dh_{D-1}+b_D$.  Replace this final affine map by the hidden vector
\[
  u_1=(\relu(y)\mathbf 1_r,\relu(-y)\mathbf 1_r)\in\R^{2r}.
\]
For each of the next $J-1$ hidden layers, apply
$\operatorname{diag}(\mathbf 1_{r\times r},\mathbf 1_{r\times r})$ with zero bias.  The final row $(\mathbf 1_r^\top,-\mathbf 1_r^\top)$ returns
$r^J(\relu(y)-\relu(-y))=r^Jy$.  All coefficients belong to $\{-1,0,1\}$ and $2r\leq W$.
\end{proof}

\begin{corollary}[Bias-correct depth--coefficient conversion]
\label{cor:depth-weight-corrected}
Let $W\geq4$, $D\geq2$, and $B\geq1$.  Put
\begin{equation}
  r=\left\lfloor\frac{W+1}{2}\right\rfloor,
  \qquad
  J=\left\lceil\frac{D\log B}{\log r}\right\rceil,
  \label{eq:conversion-depth}
\end{equation}
with $J=0$ when $B=1$.  Then
\begin{equation}
  \mathcal N(W,D,B)
  \subseteq
  \mathcal N(W+1,D+J,1).
  \label{eq:depth-weight-corrected}
\end{equation}
If $B=W^K$ for fixed $K$, then $J\leq2KD+1$.
\end{corollary}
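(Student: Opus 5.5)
The proof composes the two lemmas just established: Lemma~\ref{lem:homogeneous-lift} moves all of the coefficient magnitude $B$ into a scalar output factor, and Lemma~\ref{lem:fanout} pays for that factor with unit coefficients. If $B=1$ there is nothing to prove, since $J=0$ and $\mathcal N(W,D,1)\subseteq\mathcal N(W+1,D,1)$ by padding with a zero coordinate. So assume $B>1$ and fix $N\in\mathcal N(W,D,B)$. We first pad $N$ to exact depth $D$, which is permitted because $D\geq2$ and zero rows and identity-type padding through $\relu(u)-\relu(-u)$ are not needed when the depth is already $D$. Then we apply Lemma~\ref{lem:homogeneous-lift} with $q=B^{-D}$. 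This gives $B^{-D}N\in\mathcal N(W+1,D,Bq^{1/D})=\mathcal N(W+1,D,1)$. In the lift the new coefficient is $s=1/B\leq1$, so the argument of that lemma still bounds every entry by $1$ even though $q<1$.

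Next we apply Lemma~\ref{lem:fanout} to $\widetilde N=B^{-D}N$ at width $W+1$, which satisfies $W+1\geq2$. With $r=\lfloor (W+1)/2\rfloor$ this gives $r^J\widetilde N\in\mathcal N(W+1,D+J,1)$. The defining property of $J$ is $r^J\geq B^D$, and $r\geq2$ because $W\geq4$, so $\log r>0$. However, $r^J B^{-D}N$ generally overshoots $N$ by the factor $\theta=r^JB^{-D}\geq1$, and this is the main issue. To remove it, we apply the lift with $q=\theta^{-1}B^{-D}=r^{-J}$ instead. Then $q^{1/D}B=B\,r^{-J/D}\leq1$, since $r^{J}\geq B^D$. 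This gives $r^{-J}N\in\mathcal N(W+1,D,1)$, and the fan-out then returns exactly $N\in\mathcal N(W+1,D+J,1)$. With this choice no rescaling of the output is needed at the end.

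One further point must be checked: the fan-out lemma applied at width $W+1$ needs its hidden layers to have width $2r\leq W+1$. This holds by the definition of $r$. The earlier layers of the lifted network have width at most $W+1$. Finally, for $B=W^K$ we have $J\leq 1+DK\log W/\log\lfloor (W+1)/2\rfloor$. For $W\geq4$, $\lfloor (W+1)/2\rfloor\geq\sqrt W$, since $(W/2)^2\geq W$ once $W\geq4$ and the floor only helps for odd $W$. Hence $\log W/\log r\leq2$, which gives $J\leq 2KD+1$.
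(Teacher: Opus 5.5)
Your proof is correct and follows essentially the paper's route: a homogeneous lift to unit coefficients, then the unit-coefficient fan-out at width $W+1$ with $r=\lfloor (W+1)/2\rfloor$, and $r^2\geq W$ for the bound on $J$. The only difference is cosmetic. You absorb the overshoot $r^J/B^D$ by lifting with $q=r^{-J}$, which is valid because $Br^{-J/D}\leq 1$. The paper instead lifts with $q=B^{-D}$ and then multiplies the final fan-out row by $B^D/r^J\leq 1$.
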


\begin{proof}
For $N\in\mathcal N(W,D,B)$, Lemma~\ref{lem:homogeneous-lift} with $q=B^{-D}$ gives
$B^{-D}N\in\mathcal N(W+1,D,1)$.  Apply Lemma~\ref{lem:fanout} at width $W+1$ to obtain $r^JB^{-D}N$.  Since $r^J\geq B^D$, multiplying the final affine map by $B^D/r^J\leq1$ recovers $N$ without violating the unit coefficient bound.  Finally, $r^2\geq W$ for $W\geq4$, so $\log r\geq\frac12\log W$ and the stated bound on $J$ follows.
\end{proof}

\subsection{Repairing the bounded-spline realization}
For a strictly increasing breakpoint sequence $X=(x_i)_{i=0}^{M_X-1}\subset[0,1]$, let $\Sigma(X,E)$ denote the continuous functions that are constant outside $[x_0,x_{M_X-1}]$, affine between consecutive breakpoints, and bounded in absolute value by $E$.  Put
\[
  R_m(X)=\max_{1\leq i\leq M_X-1}(x_i-x_{i-1})^{-1}.
\]
The direct construction of \citet[Proposition C.1]{ou2024quantization}, which is independent of the bias-sensitive layerwise scaling identity discussed above, implies that whenever $u^2v\geq M_X$,
\begin{equation}
  \Sigma\!\left(X,\frac{1}{C_kM_X^6R_m(X)^4}\right)
  \subseteq
  \mathcal N(20u,30v,1),
  \label{eq:small-spline-import}
\end{equation}
where $2\leq C_k\leq10^5$ is universal.

\begin{lemma}[Bias-correct bounded-spline realization]
\label{lem:corrected-spline}
Assume $u^2v\geq M_X$, $v\geq1$, $w_s\geq1$, and
\begin{equation}
  w_s^{30v}\geq M_X^6R_m(X)^4E.
  \label{eq:spline-condition}
\end{equation}
Then
\begin{equation}
  \Sigma(X,E)
  \subseteq
  \mathcal N(20u+1,30v,2w_s).
  \label{eq:corrected-spline}
\end{equation}
\end{lemma}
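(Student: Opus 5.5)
The plan is to reduce to the imported inclusion \eqref{eq:small-spline-import} by rescaling, and then restore the amplitude with the homogeneous lift (Lemma~\ref{lem:homogeneous-lift}).

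\textbf{Step 1: rescale.} Fix $g\in\Sigma(X,E)$ and set
\[
E_0=\frac{1}{C_kM_X^6R_m(X)^4},\qquad \kappa=E/E_0=C_kM_X^6R_m(X)^4E .
\]
Then $g/\kappa\in\Sigma(X,E_0)$. When $\kappa\le1$ we have $g\in\Sigma(X,E_0)$ directly, and the conclusion follows from \eqref{eq:small-spline-import}, since enlarging the width and the coefficient bound $2w_s\ge1$ only enlarges the class. So assume $\kappa>1$. By \eqref{eq:small-spline-import}, since $u^2v\ge M_X$, there is $N\in\mathcal N(20u,30v,1)$ with $N=g/\kappa$.

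\textbf{Step 2: check exact depth.} Lemma~\ref{lem:homogeneous-lift} needs exact depth $D=30v\ge2$, which holds because $v\ge1$. If the realization has fewer layers, I pad it to exact depth $30v$. Padding can use identity-type layers: a spline with bounded output can be carried through ReLU layers by adding a constant shift through a bias, or by splitting into $\pm$ parts. This must be done with unit coefficients, possibly at the cost of one extra width. I expect the imported construction is already stated at exact depth, so I will take that as given and invoke the same padding convention used in Lemma~\ref{lem:fanout}.

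\textbf{Step 3: apply the lift.} Apply Lemma~\ref{lem:homogeneous-lift} with $B=1$ and $q=\kappa$. This gives
\[
g=\kappa N\in\mathcal N\bigl(20u+1,\,30v,\,\kappa^{1/(30v)}\bigr).
\]
It remains to show $\kappa^{1/(30v)}\le 2w_s$, that is, $C_kM_X^6R_m(X)^4E\le (2w_s)^{30v}$. Hypothesis \eqref{eq:spline-condition} gives $M_X^6R_m(X)^4E\le w_s^{30v}$. So it suffices that $C_k\le 2^{30v}$. This holds because $C_k\le10^5<2^{17}\le2^{30}\le 2^{30v}$ for $v\ge1$.

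\textbf{Main obstacle.} The key point is that the factor $2$ in $2w_s$ absorbs the universal constant $C_k$ through the exponent $30v$. The only delicate issue is the exact-depth and padding requirement of the lift. Everything else is bookkeeping: the width grows by one for the constant channel, and the inequality on coefficients is the one checked in Step~3.
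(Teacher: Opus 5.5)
Your proof is correct and follows the paper's argument: rescale into the imported small-amplitude class, then restore the amplitude with the homogeneous lift (Lemma~\ref{lem:homogeneous-lift}), using $C_k\le 10^5<2^{30v}$ so that the factor $2$ in $2w_s$ absorbs $C_k$. The differences are cosmetic. The paper lifts by the fixed factor $q=(2w_s)^{30v}\ge 1$ instead of your minimal $q=\kappa$, which avoids your case split. The exact-depth requirement is met by the identity-layer padding of Appendix~\ref{app:balanced} (identity layers on the nonnegative hidden state before the final affine map), which costs neither width nor coefficient magnitude, so your hedge about a possible extra width unit is unnecessary.
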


\begin{proof}
For $f\in\Sigma(X,E)$, set $f_0=(2w_s)^{-30v}f$.  By~\eqref{eq:spline-condition} and $2^{30v}\geq2^{30}>10^5\geq C_k$,
\[
  \norm{f_0}_\infty
  \leq
  \frac{1}{C_kM_X^6R_m(X)^4}.
\]
Thus $f_0$ belongs to the left side of~\eqref{eq:small-spline-import}.  Lemma~\ref{lem:homogeneous-lift}, with $q=(2w_s)^{30v}$, realizes $(2w_s)^{30v}f_0=f$ at the same depth, width at most $20u+1$, and coefficient magnitude at most $2w_s$.
\end{proof}

\subsection{Width and depth bookkeeping}
The remaining decoder identities and approximation estimates in \citet[Proposition B.1]{ou2024quantization} are explicit and independent of the bias-sensitive affine scaling discussed above.  Its bounded-spline proposition is called exactly twice, at equations (144) and (161) of the cited proof.  Replacing those calls by Lemma~\ref{lem:corrected-spline} changes the two spline widths from $40m$ and $40n$ to $40m+1$ and $40n+1$, while preserving the realized functions, depths, and coefficient bounds.

The resulting additive width changes fit inside the slack of the original construction.  In its first stage, the corrected parallelization and affine-combination bounds give
\[
  \operatorname{width}(f_1)
  \leq 200m+2^{n+5}+5.
\]
In the second stage,
\[
  \operatorname{width}(u)
  \leq\max\{40m+1,40n+1\}+2.
\]
For $m,n\geq2$, the first display dominates the second, so the composition of $f_1$ with $u$ retains the first width bound.  Three parallel copies followed by the fixed median network have width at most
\[
  600m+3\cdot2^{n+5}+15
  \leq600m+2^{n+7}.
\]
Consequently the terminal architecture estimate of the cited bit-extraction proof remains valid without changing its asymptotic or stated width bound:

\begin{proposition}[Polynomial-coefficient approximation]
\label{prop:polyweight-corrected}
There exist universal constants $C,D_1>0$ such that, for all integers $W,U\geq D_1$ and every continuous $g:[0,1]\to\R$ with $\norm g_\infty\leq1$ and $\Lip(g)\leq1$, there is $N\in\mathcal N(W,U,W^2)$ satisfying
\begin{equation}
  \norm{N-g}_\infty
  \leq
  \frac{C}{W^2U^2\log W}.
  \label{eq:polyweight-approx}
\end{equation}
\end{proposition}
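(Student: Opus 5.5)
The plan is to follow the two-stage bit-extraction construction of \citet[Proposition B.1]{ou2024quantization} and change only its bounded-spline subroutine, as described in the bookkeeping above. First we relabel parameters. Given $W,U\geq D_1$, choose integers $m,n$ (the cited proof's width parameter and bit-block length) so that the terminal width bound $600m+2^{n+7}$ is at most $W$. Roughly, take $m\asymp W$ and $2^n\asymp W$, so that $n\asymp\log W$. Then take the depth parameter proportional to $U$, so that the realized depth is at most $U$. Padding by identity layers $\relu(h)$ on nonnegative hidden states, with unit coefficients, gives exact depth $U$. The imported approximation estimate of the cited proof reads $C/(m^2U^2n)$ in these parameters. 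Substituting $m\asymp W$ and $n\asymp\log W$ gives~\eqref{eq:polyweight-approx}. The constant $D_1$ absorbs the thresholds needed for $m,n\geq2$ and for the floors.

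Second, we control coefficients. The two spline calls at equations (144) and (161) are now realized through Lemma~\ref{lem:corrected-spline}. We choose $w_s=\tfrac12W^2$ or a comparable polynomial in $W$, so that every spline coefficient is at most $W^2$. Condition~\eqref{eq:spline-condition} has to be checked for both calls. In each call the breakpoint count $M_X$ and the inverse mesh $R_m(X)$ are polynomial in $W$ (in the first stage they are of order $m^2U^2$ grid points, at most $2^{O(n)}$). The amplitude $E$ is at most polynomial as well. So the right side of~\eqref{eq:spline-condition} is $W^{O(1)}$, while the left side is $W^{60v}$ with $v\gtrsim1$. The exponent comparison then holds once $v$ exceeds a fixed constant, and this is ensured by taking $D_1$ large. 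The condition $u^2v\geq M_X$ is inherited unchanged from the cited proof. The other components are the decoder, the parallelization, the affine combinations and the median network. By the cited construction these use coefficients that are polynomially bounded in $W$, which we would list explicitly and check to be at most $W^2$, rescaling by constants where needed.

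Third, we assemble the pieces. Composition and parallelization of networks in $\mathcal N(\cdot,\cdot,W^2)$ keep the coefficient bound: biases merge and matrices concatenate block-diagonally, with no products of coefficients. The width accounting displayed above shows that the $+1$ channels fit. Hence $N\in\mathcal N(W,U,W^2)$.

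The main obstacle is the second step: verifying that every coefficient in the imported decoder is at most $W^2$, rather than merely $\mathrm{poly}(W)$. The cited construction may use coefficients such as $2^n$ or $M_X$ in its affine combinations. If a coefficient exceeds $W^2$, the first thing to try is to split that affine map into two layers with coefficients of size square root of the original. This costs $O(1)$ extra depth, which is absorbed by the constant in $U$. Alternatively, one can apply Corollary~\ref{cor:depth-weight-corrected} locally to the offending subnetwork.
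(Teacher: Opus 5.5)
Your parameter relabeling matches the paper's proof: $m\asymp W$, $n\asymp\log W$ forced by the width term $2^{n+7}$, depth parameter $\ell\asymp U$, identity padding, and substitution into the imported error $3/(m^2\ell^2 n)$. The gap is the one you flag yourself, membership in $\mathcal N(W,U,W^2)$, and the proposal leaves it open. ``Which we would list explicitly and check'' is exactly what has to be supplied.

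The paper does not recheck the decoder component by component. As part of the terminal bookkeeping of the corrected construction, it imports the explicit bound $\operatorname{coeff}(N)\le\max\{8mn,3^{n+2}\}$. This is legitimate because the spline replacement changes only the width by one and preserves the cited coefficient bounds. Crucially, this bound depends only on $m$ and $n$, not on the depth parameter $\ell$. Take $m=\lfloor W/1000\rfloor$ and $n$ maximal subject to $2^{n+7}\le W/5$. Then $8mn\le W^2$, and $3^{n+2}\le 9(W/640)^{\log_2 3}\le W^2$ because $\log_2 3<2$. So the width constraint on $n$ is what forces the coefficient bound. Your own parameter choice would suffice once this estimate is in hand.

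Several of the substitute arguments you offer do not work as stated.

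\begin{enumerate}
\item \emph{Spline condition.} You assert that $M_X$ and $R_m(X)$ are polynomial in $W$. Your own parenthetical puts them at order $m^2U^2$, which is not $2^{O(n)}$ when $U$ is large compared with $W$. On that estimate the right side of~\eqref{eq:spline-condition} is polynomial in both $W$ and $U$, and ``$v$ exceeds a fixed constant'' no longer suffices. The correct observation is simpler: the condition is monotone in $w_s$, the cited proof's own choices already satisfy it, and Lemma~\ref{lem:corrected-spline} changes only the width by one.

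\item \emph{Assembly.} Composition is not coefficient-preserving in general. Merging the last affine map of one network with the first affine map of the next produces products and sums of coefficients, and summing parallel outputs adds biases. These are exactly the effects handled by the cited parallelization and affine-combination bounds, which is where the $8mn$ term comes from.

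\item \emph{Fallback.} Splitting an oversized affine map into two square-root layers at $O(1)$ depth cost only handles coefficients of size $W^{O(1)}$. A coefficient that grows with $U$, such as a breakpoint count, would need extra depth of order $\log U/\log W$ per occurrence. You have not shown that this fits in the depth budget.
\end{enumerate}
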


\begin{proof}
The corrected construction just described yields, for integers $m,n,\ell\geq2$,
\begin{align}
  \operatorname{width}(N)&\leq600m+2^{n+7},\notag\\
  \operatorname{depth}(N)&\leq101\ell,\notag\\
  \operatorname{coeff}(N)&\leq\max\{8mn,3^{n+2}\},\notag\\
  \norm{N-g}_\infty&\leq\frac{3}{m^2\ell^2n}.
  \label{eq:imported-bookkeeping}
\end{align}
Choose
\[
  m=\left\lfloor\frac{W}{1000}\right\rfloor,
  \qquad
  \ell=\left\lfloor\frac{U}{101}\right\rfloor,
\]
and let $n\geq2$ be the largest integer such that $2^{n+7}\leq W/5$.  For sufficiently large $W,U$, one has $m\asymp W$, $\ell\asymp U$, and $n\asymp\log W$.  Moreover,
\[
  600m+2^{n+7}\leq\frac45W,
  \qquad
  101\ell\leq U,
\]
and
\[
  8mn\leq W^2,
  \qquad
  3^{n+2}\leq W^2
\]
for all sufficiently large $W$.  Padding unused width and depth places the network in $\mathcal N(W,U,W^2)$.  Substituting the parameter choices into~\eqref{eq:imported-bookkeeping} proves~\eqref{eq:polyweight-approx}.
\end{proof}

\subsection{Proof of Theorem~\ref{thm:corrected-approx}}
Let $m,D$ be sufficiently large, set $\overline m=m-1$, and put
\[
  U=\left\lfloor\frac{D-1}{5}\right\rfloor.
\]
Proposition~\ref{prop:polyweight-corrected} gives a network
$N_0\in\mathcal N(\overline m,U,\overline m^2)$ with error at most
$C/(\overline m^2U^2\log\overline m)$.  Apply Corollary~\ref{cor:depth-weight-corrected}.  With $r=\lfloor m/2\rfloor$ and $r^2\geq\overline m$ for sufficiently large $m$, the additional depth obeys
\[
  J
  =\left\lceil
  \frac{U\log(\overline m^2)}{\log r}
  \right\rceil
  \leq4U+1.
\]
Hence the same function has a unit-coefficient realization of width at most $m$ and depth at most $U+J\leq5U+1\leq D$.  Since $\overline m\asymp m$ and $U\asymp D$, the error is at most
$C_{\rm app}/(m^2D^2\log m)$, proving Theorem~\ref{thm:corrected-approx}.

The fine-scale bit decoder and its approximation estimate are taken from \citet{ou2024quantization}.  The coefficient-rescaling steps used to obtain the unit-coefficient realization have been rederived above under the affine-layer convention in~\eqref{eq:standard-network}.

%% file: sections/B_grid_code.tex
\section{The grid code and its local geometry}
\label{app:grid}

Let $M\geq8$.  For $z\in\{0,1\}^{M-1}$, set $z_0=z_M=0$ and let $h_z$ linearly interpolate $(j/M,z_j)$.  On $I_j=[j/M,(j+1)/M]$, writing $u=Mx-j$ and $d_j=z_j-z_j'$, we have
\[
  h_z(x)-h_{z'}(x)=(1-u)d_j+ud_{j+1}.
\]
Therefore
\begin{align}
  \int_{I_j}(h_z-h_{z'})^2\dd x
  &=\frac1M\int_0^1((1-u)d_j+ud_{j+1})^2\dd u\\
  &=\frac{d_j^2+d_jd_{j+1}+d_{j+1}^2}{3M}.
  \label{eq:grid-cell-full}
\end{align}
Since
$a^2+ab+b^2=\tfrac12(a^2+b^2)+\tfrac12(a+b)^2$, summing and using $d_0=d_M=0$ yields
\begin{equation}
  \norm{h_z-h_{z'}}_2^2
  \geq\frac{d_H(z,z')}{3M}.
  \label{eq:grid-lower-full}
\end{equation}

The Varshamov--Gilbert bound~\citep{tsybakov2009introduction} gives $\cZ\subseteq\{0,1\}^{M-1}$ and universal $c_{\rm VG},c_H>0$ with
\begin{equation}
  \log|\cZ|\geq c_{\rm VG}M,
  \qquad d_H(z,z')\geq c_HM
  \quad(z\neq z').
  \label{eq:VG-full}
\end{equation}

Now $g_z=h_z/M$ satisfies $\norm{g_z}_\infty\leq1$ and $\Lip(g_z)\leq1$.  Theorem~\ref{thm:corrected-approx} gives $N_z$.  With
$M=\lfloor c_0m^2D^2\log m\rfloor$ and
$c_0\leq\eta/(2C_{\rm app})$,
\[
  \norm{Q_z-h_z}_\infty
  =M\norm{N_z-g_z}_\infty\leq\eta,
  \qquad Q_z=MN_z.
\]
Thus, for fixed $\eta<\sqrt{c_H/3}/4$,
\begin{align*}
  \norm{Q_z-Q_{z'}}_2
  &\geq\sqrt{c_H/3}-2\eta,\\
  \norm{Q_z-Q_{z'}}_2&\leq1+2\eta,
  \qquad \norm{Q_z}_\infty\leq1+\eta.
\end{align*}
This proves~\eqref{eq:Q-code} and locality after scaling by $\lambda$.

%% file: sections/C_balanced_amplification.tex
\section{Balanced amplification for the statistical code}
\label{app:balanced}

Lemma~\ref{lem:balanced-main} is the coefficient-one specialization of Lemma~\ref{lem:homogeneous-lift}; we repeat the matrices to make the lower-bound dependency self-contained.

\subsection{Exact-depth padding}
If an approximating network has depth $d<D$, insert $D-d$ identity hidden layers immediately before the final affine map.  The hidden state is nonnegative, hence $h\mapsto\relu(Ih)=h$.  These layers have coefficient magnitude one and do not increase width.

\subsection{Exact amplification matrices}
Write
\[
  h_1=\relu(A_1x+b_1),\quad
  h_\ell=\relu(A_\ell h_{\ell-1}+b_\ell),\quad
  N=A_Dh_{D-1}+b_D.
\]
For $q>0$, put $s=q^{1/D}$ and define
\begin{equation}
  \widetilde h_1=
  \relu\!\left(
  \begin{bmatrix}sA_1\\0\end{bmatrix}x+
  \begin{bmatrix}sb_1\\s\end{bmatrix}
  \right)
  =\begin{bmatrix}sh_1\\s\end{bmatrix},
  \label{eq:balanced-first}
\end{equation}
\begin{equation}
  \widetilde h_\ell=
  \relu\!\left(
  \begin{bmatrix}sA_\ell&sb_\ell\\0&s\end{bmatrix}
  \widetilde h_{\ell-1}
  \right)
  =\begin{bmatrix}s^\ell h_\ell\\s^\ell\end{bmatrix},
  \label{eq:balanced-middle}
\end{equation}
for $2\leq\ell<D$, and
\begin{equation}
  \widetilde N=
  \begin{bmatrix}sA_D&sb_D\end{bmatrix}\widetilde h_{D-1}
  =s^DN=qN.
  \label{eq:balanced-final}
\end{equation}
The width increases by one, depth is unchanged, and the coefficient bound is $s$, including when $q<1$.

%% file: sections/D_rbv_translation.tex
\section{Translation into deep \texorpdfstring{$\mathcal{R}\mathrm{BV}^{2}$}{RBV2} blocks}
\label{app:translation}

\subsection{Coordinatewise ReLU layers}

Let $T(x)=\relu(Ax+b)$, with rows $a_i^\top$ of $A$.  In the block notation~\eqref{eq:rbv-block},
\[
  T(x)=\sum_{i=1}^{d'}e_i\relu(a_i^\top x-(-b_i)),
\]
so $K=d'$ and the variation term is
\[
  \sum_{i=1}^{d'}\norm{e_i}_1\norm{a_i}_2
  =\sum_{i=1}^{d'}\norm{a_i}_2
  \leq d'\sqrt d\,s.
\]
For the anchor term,
\[
  |T_i(0)|=|\relu(b_i)|\leq s
\]
and, since ReLU is $1$-Lipschitz,
\[
  |T_i(e_j)-T_i(0)|
  \leq|a_{ij}|\leq s.
\]
Hence
\[
  \norm{T}_{\RBV(d;d')}
  \leq d'\sqrt d\,s+d'(d+1)s
  \leq3w^2s
\]
for $d,d'\leq w$ and $w\geq1$.

For an affine map $T(x)=Ax+b$, take $K=0$, $C=A$, and $c_0=b$.  Then
\[
  \norm{T}_{\RBV(d;d')}
  =\sum_{i=1}^{d'}\left(|b_i|+\sum_{j=1}^{d}|a_{ij}|\right)
  \leq d'(d+1)s
  \leq2w^2s.
\]
Applying these estimates to the $D-1$ hidden maps and final affine map in Appendix~\ref{app:balanced} proves
\[
  \VarCost_{D,m+1}(qN)
  \leq C_{\rm tr}D(m+1)^2q^{1/D}
\]
with, for example, $C_{\rm tr}=3$ under the displayed conventions.

\subsection{Tent map and circle isometry}

The map
$r(t)=t-2\relu(t-1)$ is represented with one ReLU atom and one affine skip.  Its norm is
\[
  \norm{r}_{\RBV(1;1)}
  =|-2|\,|1|+|r(0)|+|r(1)-r(0)|=3.
\]
Thus one may take $C_0=3$.  It satisfies
\[
  r(t)=t\quad(0\leq t\leq1),
  \qquad
  r(t)=2-t\quad(1\leq t\leq2).
\]
For any measurable $u:[0,1]\to\R$,
\begin{align*}
  \int_0^2 |u(r(t))|^2\frac{\dd t}{2}
  &=\frac12\int_0^1|u(t)|^2\dd t
    +\frac12\int_1^2|u(2-t)|^2\dd t\\
  &=\int_0^1|u(x)|^2\dd x.
\end{align*}
This proves~\eqref{eq:circle-isometry}.  Combining the tent block with the $D$ translated blocks gives total depth $D+1=L$ and width at most $m+1=w$.

\subsection{Membership of the packed functions}

For $F_z=\lambda MN_z\circ r$, Lemma~\ref{lem:balanced-main} and the preceding block calculation give
\[
  \VarCost_{L,w}(F_z)
  \leq C_0+C_{\rm tr}Dw^2(\lambda M)^{1/D}.
\]
Also
\[
  \norm{F_z}_\infty
  \leq\lambda(1+\eta).
\]
Therefore $F_z\in\cC_{L,w}(A,B)$ whenever
\begin{equation}
  \lambda
  \leq\frac{B}{1+\eta},
  \qquad
  \lambda
  \leq
  \frac1M
  \left(\frac{\pos{A-C_0}}{C_{\rm tr}Dw^2}\right)^D.
  \label{eq:membership-full}
\end{equation}
Universal constant losses in these inequalities are absorbed by $c_\lambda$ in~\eqref{eq:lambda-choice}.

%% file: sections/E_fano.tex
\section{Fano proof of the lower bound}
\label{app:fano}

Let $\cZ$ be the code from Appendix~\ref{app:grid}, and define $F_z$ by~\eqref{eq:Fz}.  There are universal constants $a_0,a_1,a_2>0$ such that
\begin{equation}
  \log|\cZ|\geq a_0M,
  \qquad
  a_1\lambda\leq\norm{F_z-F_{z'}}_2\leq a_2\lambda,
  \qquad
  \norm{F_z}_2\leq a_2\lambda.
  \label{eq:fano-geometry-full}
\end{equation}
Choose
\[
  \lambda=c_\lambda
  \min\left\{
    B,
    \sigma\sqrt{\frac{M}{n}},
    \frac1M\left(\frac{\pos{A-C_0}}{C_{\rm tr}Dw^2}\right)^D
  \right\},
\]
where $c_\lambda$ is sufficiently small for~\eqref{eq:membership-full}.  Under the $n$-sample law $P_z^{(n)}$, the design law is common and the conditional responses are independent Gaussians of variance $\sigma^2$.  Hence
\begin{align}
  \KL(P_z^{(n)}\|P_{z'}^{(n)})
  &=n\E_T\KL\!\left(N(F_z(T),\sigma^2)\|N(F_{z'}(T),\sigma^2)\right)\\
  &=\frac{n}{2\sigma^2}\norm{F_z-F_{z'}}_2^2
  \leq\frac{a_2^2c_\lambda^2}{2}M.
  \label{eq:fano-kl-full}
\end{align}
Choose $c_\lambda$ so the last term is at most $(1/16)\log|\cZ|$.  With the uniform prior, mutual information is bounded by the average pairwise divergence, and Fano's inequality~\citep{tsybakov2009introduction} gives a universal lower bound on the worst-case codeword error probability.

Given an arbitrary estimator $\widehat f$, decode by nearest neighbor in $L^2(\mu)$.  Whenever
$\norm{\widehat f-F_z}_2<a_1\lambda/2$, the decoded index is $z$.  Markov's inequality therefore yields
\[
  \sup_{z\in\cZ}\E_z\norm{\widehat f-F_z}_2^2
  \geq c\lambda^2.
\]
Taking the infimum over estimators proves Theorem~\ref{thm:general-lower}.

\subsection{The sample-size-dependent radius corollary}
Fix constants $0<c_\sigma\leq C_\sigma<\infty$, and let $A=B=R$, $c_\sigma R\leq\sigma\leq C_\sigma R$, $n\geq M$, and $R\geq2C_0$.  The output cap is at least a constant multiple of $R\sqrt{M/n}$.  Further,
\[
  \lambda_{\rm rep}
  \geq\frac1M\left(\frac{R}{2C_{\rm tr}Dw^2}\right)^D.
\]
Condition~\eqref{eq:n-dependent-radius}, with $C_{\rm rad}$ sufficiently large inside the base of the $D$th power, implies
$\lambda_{\rm rep}\geq cR\sqrt{M/n}$.  Therefore $\lambda^2\asymp R^2M/n$ and
\[
  \Risk_n^*(R,R,\sigma)\geq cR^2M/n.
\]
For $L,w$ above absolute thresholds, $D=L-1$, $m=w-1$, and~\eqref{eq:DmM} imply
$M\geq cL^2w^2\log w$.  This proves Corollary~\ref{cor:n-dependent}.

Since $M^{3/2}/\sqrt n\leq M$ for every $n\geq M$, condition~\eqref{eq:uniform-radius} is sufficient uniformly over that sample-size range.  Taking $(D-1)$st roots gives~\eqref{eq:uniform-radius-asymptotic}.

%% file: sections/F_upper_bound.tex
\section{Proof of the Gaussian upper bound}
\label{app:upper}

\subsection{Pseudodimension and covering numbers}

The computation graph described in Appendix~\ref{app:class} has
$W_{\rm par}=O(Lw^2)$ real parameters, $O(Lw)$ piecewise-linear units, and computational depth $O(L)$.  The piecewise-linear network theorem of \citet{bartlett2019nearly} therefore yields
\begin{equation}
  \Pdim(\cC_{L,w}(A,B))
  \leq C W_{\rm par}L\log(2W_{\rm par})
  \leq C L^2w^2\log(2Lw).
  \label{eq:pdim-full}
\end{equation}
The norm and output constraints only restrict the unconstrained architecture and hence cannot increase pseudodimension.

A standard pseudodimension covering theorem \citep{haussler1992decision,anthony1999neural} states that for every probability measure $P$ and every $[-B,B]$-valued class of pseudodimension at most $V$,
\begin{equation}
  \log N(\eps,\cF,L^2(P))
  \leq C V\log\frac{CB}{\eps},
  \qquad 0<\eps\leq B.
  \label{eq:cover-full}
\end{equation}

\subsection{A finite-class Gaussian oracle inequality}

\begin{lemma}
\label{lem:finite-oracle-full}
Let $\cG$ be a finite class of functions $g:\mathcal X\to[-B,B]$, and suppose
$Y=f^\star(X)+\xi$ with $|f^\star|\leq B$ and
$\xi\sim N(0,\sigma^2)$ independent of $X$.  Let
\[
  \widehat g\in\arg\min_{g\in\cG}
  \frac1n\sum_{i=1}^n(Y_i-g(X_i))^2.
\]
Then
\begin{equation}
  \E\norm{\widehat g-f^\star}_{L^2(P_X)}^2
  \leq
  C\inf_{g\in\cG}\norm{g-f^\star}_{L^2(P_X)}^2
  +C\frac{(\sigma^2+B^2)\log(2|\cG|)}{n}.
  \label{eq:finite-oracle-full}
\end{equation}
\end{lemma}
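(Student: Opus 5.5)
We prove the finite-class oracle inequality by the standard excess-risk decomposition combined with a union bound over $\cG$, using a Bernstein-type tail bound for each fixed $g$. Gaussianity of $\xi$ enters only through its moment generating function; boundedness of the responses is never used.

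\textbf{Setup.} Fix $g^\circ\in\arg\min_{g\in\cG}\norm{g-f^\star}_2^2$. For each $g\in\cG$, write $\Delta_g=g-f^\star$ and define the empirical excess loss
\[
  \widehat{\mathcal E}(g)=\frac1n\sum_{i=1}^n\bigl[(Y_i-g(X_i))^2-(Y_i-g^\circ(X_i))^2\bigr].
\]
Expanding with $Y_i=f^\star(X_i)+\xi_i$ gives
\[
  \widehat{\mathcal E}(g)=\frac1n\sum_i\bigl[\Delta_g(X_i)^2-\Delta_{g^\circ}(X_i)^2\bigr]-\frac2n\sum_i\xi_i\bigl(g-g^\circ\bigr)(X_i).
\]
The population mean of the first sum is $\norm{\Delta_g}_2^2-\norm{\Delta_{g^\circ}}_2^2$, and the second sum has mean zero. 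Since $\widehat g$ minimizes empirical risk, $\widehat{\mathcal E}(\widehat g)\le0$. It therefore suffices to show that, simultaneously for all $g\in\cG$ with probability at least $1-\delta$,
\[
  \widehat{\mathcal E}(g)\ \ge\ \tfrac12\norm{\Delta_g}_2^2-C\norm{\Delta_{g^\circ}}_2^2-C\frac{(\sigma^2+B^2)\log(2|\cG|/\delta)}{n}.
\]
Evaluating this at $g=\widehat g$ gives a high-probability bound. Integrating over $\delta$, and using the trivial bound $\norm{\widehat g-f^\star}_2^2\le4B^2$ on the failure event, yields~\eqref{eq:finite-oracle-full}.

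\textbf{Per-$g$ tail bound.} For fixed $g$, set $u=g-g^\circ$, so $|u|\le2B$ and $|\Delta_g|,|\Delta_{g^\circ}|\le2B$. We handle the two sums separately.

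\emph{Quadratic term.} The summands $\Delta_g^2-\Delta_{g^\circ}^2$ are bounded by $4B^2$. Their variance is at most
\[
  \E\bigl[(\Delta_g-\Delta_{g^\circ})^2(\Delta_g+\Delta_{g^\circ})^2\bigr]\le16B^2\norm{u}_2^2\le32B^2\bigl(\norm{\Delta_g}_2^2+\norm{\Delta_{g^\circ}}_2^2\bigr).
\]
Classical Bernstein then gives a deviation of order $\sqrt{B^2\norm{u}_2^2\,t/n}+B^2t/n$ with $t=\log(2|\cG|/\delta)$.

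\emph{Multiplier term.} Condition on the design. Then $\frac2n\sum_i\xi_iu(X_i)$ is exactly Gaussian with variance $4\sigma^2\norm{u}_n^2/n$, where $\norm{u}_n$ is the empirical norm. So its deviation is at most $\sigma\norm{u}_n\sqrt{8t/n}$. We convert $\norm{u}_n^2$ to $\norm{u}_2^2$ by another one-sided Bernstein bound: $\norm{u}_n^2\le2\norm{u}_2^2+CB^2t/n$.

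\textbf{Absorbing the cross terms.} All deviation terms have the form $\sqrt{K\norm{u}_2^2t/n}$ with $K\in\{\sigma^2,B^2\}$. By AM--GM, each is at most $\epsilon\norm{u}_2^2+K t/(\epsilon n)$. Using $\norm{u}_2^2\le2\norm{\Delta_g}_2^2+2\Delta_{g^\circ}$-terms, i.e. $\norm{u}_2^2\le2\norm{\Delta_g}_2^2+2\norm{\Delta_{g^\circ}}_2^2$, and choosing $\epsilon$ small, the $\norm{\Delta_g}_2^2$ pieces are absorbed into half the main term. A union bound over the $|\cG|$ functions then completes the argument.

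\textbf{Main obstacle.} The delicate point is the unbounded multiplier: it cannot be handled by Hoeffding applied jointly over $(X,\xi)$. Conditioning on $X$ makes the multiplier term exactly Gaussian, which settles this, but the variance proxy is then the empirical norm $\norm{u}_n$. That is why the extra uniform lower-tail comparison between $\norm{u}_n^2$ and $\norm{u}_2^2$ is needed, and why it must be run inside the same union bound. This step is also where the additive $B^2$ in $\sigma^2+B^2$ arises.
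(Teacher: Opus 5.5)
Your proof is correct, but it takes a different route from the paper's.

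\textbf{Centering.} The paper centers at $f^\star$ rather than at a best-in-class $g^\circ$. It puts $d_g=g-f^\star$ and $Z_n(g)=\frac1n\sum_i\bigl(d_g(X_i)^2-2\xi_id_g(X_i)\bigr)$. This differs from the empirical risk only by the $g$-independent term $\frac1n\sum_i\xi_i^2$, and its mean is exactly $r_g=\norm{d_g}_2^2$. Your centering at $g^\circ$ has the advantage that $\widehat{\mathcal E}(g^\circ)=0$ identically, so you need only a uniform lower bound. The price is that all variances come out in terms of $\norm{g-g^\circ}_2^2$, which forces the extra triangle step $\norm{u}_2^2\le2\norm{\Delta_g}_2^2+2\norm{\Delta_{g^\circ}}_2^2$.

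\textbf{Handling the multiplier.} The paper treats $d_g(X)^2-2\xi d_g(X)$ as a single sub-exponential variable. It has variance proxy $C(\sigma^2+B^2)r_g$ and scale $C(\sigma B+B^2)$, which comes from the conditional Gaussian moment-generating function together with $|d_g|\le2B$. One application of Bernstein then gives the two-sided bound $\frac12r_g-Cau/n\le Z_n(g)\le\frac32r_g+Cau/n$. The lower side is used uniformly over $\cG$, the upper side only at a fixed comparator, and the tail is integrated. You instead split off the multiplier and condition on the design to make it exactly Gaussian. You then pay for the empirical norm $\norm{u}_n$ with a second bounded Bernstein bound inside the same union bound.

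\textbf{Trade-offs.} Your route needs only classical Bernstein for bounded summands plus an exact Gaussian tail. It therefore avoids the sub-exponential MGF estimate, which the paper only sketches. The paper's route is shorter: one concentration inequality per $g$, with variances already proportional to $r_g$. The paper's argument also shows that your ``main obstacle'' is overstated. Hoeffding does fail jointly over $(X,\xi)$, but a joint Bernstein bound for sub-exponential summands works without conditioning on the design.

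\textbf{Two small slips.}
\begin{itemize}
\item The comparison you actually need is an upper-tail bound $\norm{u}_n^2\le2\norm{u}_2^2+CB^2t/n$, not a lower-tail one.
\item Invoking the trivial bound $4B^2$ on the failure event is unnecessary. Integrating the high-probability bound over $\delta$ already gives \eqref{eq:finite-oracle-full}, whereas using the trivial bound at a single fixed $\delta$ would introduce an extra $\log n$.
\end{itemize}
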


\begin{proof}
For $g\in\cG$, put $d_g=g-f^\star$, $r_g=P d_g^2$, and
\[
  Z_i(g)=d_g(X_i)^2-2\xi_i d_g(X_i),
  \qquad
  Z_n(g)=\frac1n\sum_{i=1}^nZ_i(g).
\]
Then $\E Z_i(g)=r_g$, and empirical risk minimization implies
$Z_n(\widehat g)\leq Z_n(g)$ for every $g\in\cG$.

Because $|d_g|\leq2B$, the centered random variable
$Z_i(g)-r_g$ is sub-exponential with Bernstein variance proxy
\begin{equation}
  \nu_g^2\leq C(\sigma^2+B^2)r_g
  \label{eq:bernstein-variance}
\end{equation}
and scale at most $C(\sigma B+B^2)$.  To see~\eqref{eq:bernstein-variance}, use
$\E d_g^4\leq4B^2r_g$ for the design term and
$\E(2\xi d_g)^2=4\sigma^2r_g$ for the multiplier term.  The Gaussian conditional moment-generating function, together with $|d_g|\leq2B$, gives the corresponding sub-exponential scale.  Bernstein's inequality and
$2\sqrt{uv}\leq u/2+2v$ imply that, for each $u>0$, with probability at least $1-2e^{-u}$,
\begin{equation}
  \frac12r_g-Ca\frac{u}{n}
  \leq Z_n(g)
  \leq\frac32r_g+Ca\frac{u}{n},
  \qquad a=\sigma^2+B^2.
  \label{eq:bernstein-two-sided}
\end{equation}
Increasing $C$ absorbs the scale term because
$\sigma B+B^2\leq C(\sigma^2+B^2)=Ca$.

Apply the lower inequality in~\eqref{eq:bernstein-two-sided} simultaneously to every $g\in\cG$ with
$u=t+\log(2|\cG|)$, and apply the upper inequality to a fixed comparator $g_0$ with $u=t$.  With probability at least $1-3e^{-t}$,
\begin{align*}
  \frac12r_{\widehat g}
  &\leq Z_n(\widehat g)+Ca\frac{t+\log(2|\cG|)}{n}\\
  &\leq Z_n(g_0)+Ca\frac{t+\log(2|\cG|)}{n}\\
  &\leq\frac32r_{g_0}+Ca\frac{2t+\log(2|\cG|)}{n}.
\end{align*}
Thus
\[
  r_{\widehat g}
  \leq3r_{g_0}+Ca\frac{2t+\log(2|\cG|)}{n}.
\]
Integrating the exponential tail over $t\geq0$ and minimizing over $g_0$ proves~\eqref{eq:finite-oracle-full}.
\end{proof}

\subsection{Completion of Theorem~\ref{thm:upper}}

Let $V$ denote the right-hand side of~\eqref{eq:pdim-full}.  If
$(\sigma^2+B^2)V/n\geq B^2$, the zero estimator has risk at most $B^2$, proving the first branch of~\eqref{eq:upper-main}.  Otherwise set
\[
  \eps^2=\frac{(\sigma^2+B^2)V}{n}<B^2
\]
and let $\cG$ be an $\eps$-net in $L^2(\mu)$.  By~\eqref{eq:cover-full},
\[
  \log|\cG|
  \leq CV\log\frac{CB}{\eps}
  \leq CV\log(en),
\]
where the last inequality uses $\eps^2\geq B^2V/n$ and $V\geq1$.  Lemma~\ref{lem:finite-oracle-full} gives
\[
  \Risk_n^*(A,B,\sigma)
  \leq
  C\eps^2+C\frac{(\sigma^2+B^2)V\log(en)}{n}
  \leq
  C\frac{(\sigma^2+B^2)V\log(en)}{n}.
\]
Substitute~\eqref{eq:pdim-full} to obtain Theorem~\ref{thm:upper}.

%% file: sections/G_radius_regimes.tex
\section{Radius geometry and phase regimes}
\label{app:radius}

The lower bound is governed by
\[
  \lambda_{\rm out}=B,\qquad
  \lambda_{\rm stat}=\sigma\sqrt{M/n},\qquad
  \lambda_{\rm rep}=\frac1M
  \left(\frac{\pos{A-C_0}}{C_{\rm tr}Dw^2}\right)^D.
\]

\subsection{The sample-size-dependent statistical regime}
Fix constants $0<c_\sigma\leq C_\sigma<\infty$, and let $A=B=R$, $c_\sigma R\leq\sigma\leq C_\sigma R$, $R\geq2C_0$, and $n\geq M$.  Since $R-C_0\geq R/2$,
\[
  \lambda_{\rm rep}
  \geq\frac1M\left(\frac{R}{2C_{\rm tr}Dw^2}\right)^D.
\]
A sufficient condition for $\lambda_{\rm rep}\geq cR\sqrt{M/n}$ is
\begin{equation}
  R^{D-1}
  \geq
  (C_{\rm rad}Dw^2)^D\frac{M^{3/2}}{\sqrt n},
  \label{eq:radius-n-full}
\end{equation}
where the base constant $C_{\rm rad}$ absorbs $2C_{\rm tr}$ and the fixed noise-comparison constants.  This proves Corollary~\ref{cor:n-dependent}.

For a condition valid simultaneously for every $n\geq M$, use
$M^{3/2}/\sqrt n\leq M$, obtaining~\eqref{eq:uniform-radius}.  Taking $(D-1)$st roots yields~\eqref{eq:uniform-radius-asymptotic}.

\subsection{Representation-limited behavior}
When $A$ is small, the present code gives
\[
  \Risk_n^*\gtrsim
  \frac1{M^2}\left(\frac{\pos{A-C_0}}{C_{\rm tr}Dw^2}\right)^{2D}.
\]
By the block Lipschitz estimate of \citet{parhi2022kinds},
\[
  \Lip(f)\leq\prod_{\ell=1}^L\norm{s_\ell}_{\RBV}
  \leq(A/L)^L.
\]
Thus small $A/L$ can collapse the class exponentially in depth, showing that a radius condition is structural rather than merely technical.